\def\BibTeX{{\rm B\kern-.05em{\sc i\kern-.025em b}\kern-.08em
    T\kern-.1667em\lower.7ex\hbox{E}\kern-.125emX}}
\newtheorem{theorem}{Theorem}
\newtheorem{property}{Property}
\title{Structured Deep Neural Network-Based Backstepping Trajectory Tracking Control for Lagrangian Systems}
\author{Jiajun Qian, Liang Xu, Xiaoqiang Ren, Xiaofan Wang
\thanks{The work was supported in part by the National Natural Science Foundation of China under Grant 62373239, 62273223, 62336005 and 62333011, and the Project of Science and Technology Commission of Shanghai Municipality under Grant 22JC1401401. }
\thanks{Jiajun Qian, Xiaofan Wang are with the School of Mechatronic Engineering and Automation, Shanghai University, Shanghai, China. Emails: {\tt\small \{qianjiajun, xfwang\}@shu.edu.cn}}
\thanks{Xiaoqiang Ren is with School of Mechatronic Engineering and Automation, Shanghai University and Key Laboratory of Marine Intelligent Unmanned Swarm Technology and System, Ministry of Education, Shanghai, China. Emails: {\tt\small \href{mailto:xqren@shu.edu.cn}{xqren@shu.edu.cn}}}
\thanks{Liang Xu is with the School of Future Technology, Shanghai University, Shanghai, China.
        Email: {\tt\small\href{mailto:liang-xu@shu.edu.cn}{liang-xu@shu.edu.cn}} (corresponding author)}
}
\begin{document}
\maketitle

\begin{abstract}
  Deep neural networks (DNN) are increasingly being used to learn controllers due to their excellent approximation capabilities.
  However, their black-box nature poses significant challenges to closed-loop stability guarantees and performance analysis.
  In this paper, we introduce a structured DNN-based controller for the trajectory tracking control of Lagrangian systems using backing techniques. 
  By properly designing neural network structures, the proposed controller can ensure closed-loop stability for any compatible neural network parameters.
  In addition, improved control performance can be achieved by further optimizing neural network parameters.
  Besides, we provide explicit upper bounds on tracking errors in terms of controller parameters, which allows us to achieve the desired tracking performance by properly selecting the controller parameters.
  Furthermore, when system models are unknown, we propose an improved Lagrangian neural network (LNN) structure to learn the system dynamics and design the controller.
  We show that in the presence of model approximation errors and external disturbances,  the closed-loop stability and tracking control performance can still be guaranteed.
 The effectiveness of the proposed approach is demonstrated through simulations.
\end{abstract}

\begin{IEEEkeywords}
deep neural networks, backstepping control,  trajectory tracking, stability guarantees, Lagrangian systems.
\end{IEEEkeywords}

\section{Introduction}
Learning-based control methods have gained significant attention in the control community due to the development of machine learning techniques.
Moreover, neural network-based control has now become prevalent due to the excellent function approximation capability of neural networks.
Traditional neural network-based control uses shallow neural networks~\cite{kumpati1990identification}.
In contrast, deep neural networks (DNNs) are superior to shallow NNs in representing function compositions~\cite{rolnick2018power,lin2017does} and avoid the curse of dimensionality~\cite{poggio2017and, b2dc17b4-50bc-32aa-a52a-1db8b75640aa}, therefore motivating the applications of DNNs in control systems~\cite{7139643, lutter2018deep,cranmer2020lagrangian,takeishi2021learning,sanyal2023ramp, chee2022knode,bauersfeld2021neurobem, dawson2023safe,chang2019neural,gaby2022lyapunov,pmlr-v87-richards18a,zhou2022neural}

DNNs in control systems are mainly used to learn dynamics or learn controllers.
In the first category, researchers use DNN to learn complex models~\cite{7139643, lutter2018deep,cranmer2020lagrangian,takeishi2021learning,sanyal2023ramp} or combine them with the first principles to capture the uncertainty and residual terms of the system~\cite{ chee2022knode,bauersfeld2021neurobem}. 
The second application involves using neural networks to learn controllers or control certificates.
However, since DNNs are black-box models, providing formal stability guarantees for neural network-based controllers is difficult.
Many approaches have been proposed to address this challenge.

Neural Lyapunov control methods~\cite{dawson2023safe,chang2019neural,gaby2022lyapunov,pmlr-v87-richards18a,zhou2022neural} use neural networks to learn both a Lyapunov function and a control law.
This framework was first proposed in~\cite{chang2019neural}, which comprises a learner and a falsifier.
The learner generates a Lyapunov candidate, while the falsifier aims to identify points where the Lyapunov candidate fails to satisfy the Lyapunov condition.
Subsequently, the identified points are added to the training dataset and the neural Lypaunov candidate and the neural controller are re-trained.
This process is repeated until the falsifier cannot find any violation points.
The method has been applied in various fields, such as robot control~\cite{dai2021lyapunov} and learning safe control strategies ~\cite{jin2020neural}.
However, identifying points where the Lyapunov candidate does not satisfy the Lyapunov condition is computationally complex.
Moreover, the training and falsification process may undergo several rounds before a Lyapunov function and a neural network controller can be found.
As a result, the neural Lypaunov control method is computationally demanding.

Therefore, effectively reducing the computational complexity while providing formal stability guarantees for DNN based controllers is
challenging.
Structured DNN controllers are proposed for Hamiltonian or Lagrangian systems as an alternative~\cite{xu2022neural, furieri2022distributed, khader2021learning, massaroliOptimalEnergyShaping2022, sanchez-escalonillaStabilizationUnderactuatedSystems2022}.
Closed-loop stability under structured DNN controllers can be guaranteed as long as the neural networks satisfy certain structures.
These methods require only the solution of a simple training problem, significantly reducing computational complexity.
However, there are few structured DNN-based controllers for the trajectory tracking control problem.

Several existing neural network-based controllers are proposed for trajectory tracking control problems~\cite{yang2020neural,9352495,9694590,xu2023design}. The work~\cite{yang2020neural,9352495} use the powerful function approximation capibility of Radial Basis Function Neural Network (RBFNN) to learn controllers. These methods ensure stability by customizing the parameter update rate of RBFNN, but cannot provide performance guarantees. The work~\cite{9694590,xu2023design} use the outputs of the neural networks to determine the control gains of the traditional controller such as kinematic controller and PID controller. By training and optimizing the parameters of the neural network, the neural network can quickly and accurately find the optimal values of the controller parameters. Since the output of neural networks only determine the control gains of the controller, problems with traditional controllers, such as the need to linearize the system model, inability to cope with disturbances, and failure to adapt to model uncertainty or unknown system models, still persist.

In this paper, we present a structured DNN-based controller for the trajectory tracking control of Lagrangian systems.
Our proposed DNN-based controller is constructed using the backstepping technique~\cite{khalil2002nonlinear,hu2012adaptive}.
We demonstrate that closed-loop stability is ensured for any compatible value of the neural network parameters.
Moreover, we explicitly provide an upper bound for tracking errors in terms of the control parameters.
In scenarios where obtaining model information is difficult, we propose a modified Lagrangian Neural Network (LNN)~\cite{cranmer2020lagrangian} to learn the system model.
Moreover, we show that in the presence of model learning uncertainties and disturbances, we can still guarantee a bounded tracking error.
Finally, we substantiate the effectiveness of our proposed tracking controller through a series of simulations.  

This paper is organized as follows.
In Section \ref{sec.Preliminaries}, some preliminaries are provided.
Section \ref{sec.main result} gives the tracking controller design and performance analysis.
Section \ref{sec.LNN based} shows how to use improved LNNs to learn dynamics and control designs that can guarantee closed-loop stability.
Several simulation results are given in Section \ref{sec.experiment}.
Some conclusions are provided in Section \ref{sec.conclusion}.

Notations: $\mathbb{R}$ ($\mathbb{R}^n$) denote the set of real numbers ($n$-dimensional real vectors). $\lambda_{\min}(\cdot)$  denotes the minimum eigenvalue of a symmetric matrix. $I$ denote the identity matrix. $W_{1:k}$ denote the sequence $\{W_1, \ldots, W_k\}$.  $\|\cdot\|$ denotes the standard Euclidean norm.

\section{Preliminaries}\label{sec.Preliminaries}
In this section, we briefly introduce the Euler-Lagrange equation, Fully Connected Neural Network (FCNN), Lagrangian Neural Networks (LNNs)~\cite{cranmer2020lagrangian} and the Input Convex Neural Network (ICNN)~\cite{amos2017input}, which will be used in subsequent sections. 

\subsection{Euler-Lagrange Equation}
The Euler-Lagrange equation describes the motion of a mechanical system, which is
\begin{equation}
\begin{aligned}
\frac{d}{dt}\left( \frac{\partial L}{\partial \bm{\dot{q}}} \right) - \frac{\partial L}{\partial \bm{q}}  =\bm{u}+\bm{\tau^d}, 
 L(\bm{q}, \bm{\dot{q}})=T(\bm{q},\bm{\dot{q}})-V(\bm{q}),
\end{aligned}\label{eq.Lagrangian}   
\end{equation}
where $L$ is the Lagrangian, $T$ is the kinetic energy, $V$ is the potential energy, $\bm{q}$ is the generalized coordinates, $\bm{u}$ is the generalized non-conservative force, and $\bm{\tau^d}$ is the disturbance acting on the system.
 Equation~\eqref{eq.Lagrangian} can be equivalently represented as
\begin{equation}
    \begin{aligned}
        \bm{M}(\bm{q})\bm{\ddot{q}}+\bm{C}(\bm{q},\bm{\dot{q}})\bm{\dot{q}}+\bm{G}(\bm{q})=\bm{u}+\bm{\tau^d},
    \end{aligned}\label{eq.manipulator}
\end{equation}
where $\bm{M}(\bm{q})=\frac{\partial^{2}(T(\bm{q}, \bm{\dot{q}}))}{\partial \bm{\dot{q}}\partial\bm{\dot{q}}}$ is the inertia matrix, $\bm{C}(\bm{q},\bm{\dot{q}})=\bm{\dot{M}}(\bm{q})-\frac{1}{2}\bm{\dot{M}}(\bm{q})^{\top}$ is the Coriolis matrix, and $\bm{G}(\bm{q})=-\frac{\partial (V(\bm{q}))}{\partial \bm{q}}$ is the gravitational force vector.
The Euler-Lagrange equation has the following properties~\cite{lewisRobotManipulatorControl2004}.
\begin{property}    \label{property. M definit}
$\bm{M}$ is positive definite and is bounded by
    \begin{align*}
        a_1\|\bm{x}\|^2\leq\bm{x}^{\top}\bm{M}\bm{x}\leq a_2\|\bm{x}\|^2, \quad \forall \bm{x} \in \mathbb{R}^n
    \end{align*}
where $a_1,a_2 \in \mathbb{R}^{+}$ are positive constants.
\end{property}
\begin{property}    \label{property. M-2C}
$\bm{\dot{M}}-2\bm{C}$ is skew symmetric satisfying $\bm{x}^{\top}(\bm{\dot{M}}-2\bm{C})\bm{x}=0, \quad \forall \bm{x} \in \mathbb{R}^n$.
\end{property}

\subsection{Fully Connected Neural Network}
Fully Connected Neural Network (FCNN), also known as Multilayer Perceptron (MLP), is a DNN structure.
It contains an input layer, one or more hidden layers, and an output layer.
Each neuron in each layer is connected to every neuron in the adjacent layer.
FCNN has the following expression
\begin{equation}
    \begin{aligned}
      \bm{y_{i+1}}&= \sigma_i\left(\bm{W_i}\bm{y_i}+\bm{b_i}\right), \quad  i=0,\ldots,k-1,\\
      f(\bm{x};\bm{\gamma})&=\bm{y_k},
\end{aligned} \label{eq.standard FCNN}
\end{equation}
where $\bm{y}_i$ are the layer output with $\bm{y}_0=\bm{x}$ being the neural network input, $\sigma_i$ represents the layer activation function, $\bm{\gamma}= \{ \bm{W_{0:k-1}},\bm{b_{0:k-1}}\}$ are the trainable parameters of FCNN. 

\subsection{Lagrangian Neural Network}
Lagrangian Neural Network (LNN)~\cite{cranmer2020lagrangian} is used to learn the Lagrangian function of mechanical systems from data.
Unlike conventional supervised learning, LNN is constructed to respect the Euler-Lagrange equation in an unsupervised manner.
By endowing the laws of physics, LNN has better inductive biases.

In LNN, the Lagrangian function is approximated by an FCNN $\bm{\mathcal{L}}(\bm{q},\bm{\dot{q}};\bm{\gamma})$ with the parameters $\bm{\gamma}$.
During training,  $N+1$ samples  $\{\bm{q}(i),\bm{\dot{q}}(i), \bm{\ddot{q}}(i),\bm{u}(i)\}_{i=0}^{N}$  are collected.
Then the LNN is trained by solving the following optimization problem
\begin{equation}
    \begin{aligned}
    \min_{\bm{\gamma}} \quad & \sum_{i=0}^{N}\left(\bm{\hat{\ddot{q}}}(i)-\bm{\ddot{q}}(i)\right)^{\top}\bm{Q}\left(\bm{\hat{\ddot{q}}}(i)-\bm{\ddot{q}}(i)\right)\\
   \mathrm{s.t.},\quad &\bm{\hat{\ddot{q}}}(i)=\frac{\partial^{2}\mathcal{L}(\bm{q}(i),\bm{\dot{q}}(i);\bm{\gamma})}{\partial \bm{\dot{q}}(i)\partial\bm{\dot{q}}(i)}^{-1} [\bm{u}(i)\\
   &+\frac{\partial \mathcal{L}(\bm{q}(i),\bm{\dot{q}}(i);\bm{\gamma})}{\partial \bm{q}(i)} -\frac{\partial^{2} \mathcal{L}(\bm{q}(i),\bm{\dot{q}}(i);\bm{\gamma})}{\partial \bm{q}(i)\partial\bm{\dot{q}}(i)}\bm{\dot{q}}(i)]
    \end{aligned}\label{eq.LNNs optimization}
\end{equation}
where $\bm{Q}=\bm{Q}^{\top}>0$ is the weight matrix, and the Jacobian and Hessian matrix of $\mathcal{L}$ are calculated using automatic differentiation~\cite{baydinAutomaticDifferentiationMachine2017}.
By solving ~\eqref{eq.LNNs optimization}, one can obtain the optimal parameter $\bm{\gamma^*}$.
Then, using automation differentiation, we can obtain an approximation of $\bm{M}$ as $\bm{\hat{M}}=\frac{\partial^{2}\mathcal{L}(\bm{q},\bm{\dot{q}};\bm{\gamma^*})}{\partial \bm{\dot{q}}\partial\bm{\dot{q}}}$, $\bm{C}$ as $\bm{\hat{C}}=\bm{\dot{\hat{M}}}(\bm{q})-\frac{1}{2}\bm{\dot{\hat{M}}}(\bm{q})^{\top}$, and $\bm{G}$ as $\bm{\hat{G}}=-\frac{\partial \mathcal{L}(\bm{q},\bm{\dot{q}};\bm{\gamma^*})}{\partial \bm{q}}+\frac{1}{2}\bm{\dot{\hat{M}}}(\bm{q})^{\top}\bm{\dot{q}}$.

To effectively solve the optimization problem~\eqref{eq.LNNs optimization}, a special initialization strategy is necessary~\cite{amos2017input}, which is defined according to the depth and width of the LNN
\begin{equation}
    \begin{aligned}
    \left.\nu=\frac{1}{\sqrt{n}}\left\{\begin{array}{cc}2.2&\text{First layer}\\0.58i&\text{Hidden layer } i\in\{1,\ldots\}\\n&\text{Output layer}\end{array}\right.\right.,
\end{aligned} \label{eq. LNNs initial}
\end{equation}
where $i$ represents the $i$-th layer of the neural network, $n$ is the number of neurons in this layer.  The parameters of every layer are then initialized according to $\mathcal{N}(0,\nu^2)$, where $\mathcal{N}$ represents a Gaussian distribution. 

\subsection{Input Convex Neural Network}
Input Convex Neural Network (ICNN) is a feedforward neural network architecture with constraints on the neural network parameters to ensure that the output of the neural network is convex with respect to all inputs (Fully Input Convex Neural Networks or FICNN) or with respect to a subset of inputs (Partially Input Convex Neural Network or PICNN). 

FICNN uses the following architecture for $ i=0,\ldots,k-1$
\begin{equation}
    \begin{aligned}
      \bm{y_{i+1}}= \sigma_i\left(\bm{W_i^{(y)}}\bm{y_i}+\bm{W_i^{(x)}}\bm{x}+\bm{b_i}\right) ,
      f(\bm{x};\bm{\theta})=\bm{y_k},
\end{aligned} \label{eq.standard FICNN}
\end{equation}
where $\sigma_i$ represents layer activation functions, $\bm{x}$ is the FICNN input, $\bm{y_0}= 0,\bm{W_0^{(y)}}= 0$, $\bm{\theta}= \{ \bm{W_{1:k-1}^{(y)}}, \bm{W_{0:k-1}^{(x)}},\bm{b_{0:k-1}}\}$ are network parameters.
One can show that $f(\bm{x};\bm{\theta})$ is convex in $\bm{x}$  if $\bm{W_{1:k-1}^{(y)}}$ are nonnegative and the activation functions $\sigma_i$ are convex and non-decreasing~\cite{amos2017input}. 

PICNN uses the following architecture for $ i=0,\ldots,k-1$
\begin{equation}
    \begin{aligned}
      &\bm{v_{i+1}}=\widetilde{\sigma}_i(\bm{\widetilde{W}_i}\bm{v_{i}+\bm{\widetilde{b}_i}}),\\
      &\bm{y_{i+1}}= \sigma_i\left(\bm{W_i^{(y)}}\left(\bm{y_i}\circ(\bm{W^{(yv)}_i}\bm{v_{i}}+\bm{b^{(y)}_i})\right)\right .\\
     &\quad \left .+\bm{W_i^{(x)}}\left(\bm{x}\circ(\bm{W^{(xv)}_i}\bm{v_{i}}+\bm{b^{(x)}_i})\right)+\bm{W_i^{(v)}}\bm{v_i}+\bm{b_i}\right) ,\\
      &f(\bm{\widetilde{x},\bm{x}};\bm{\theta})=\bm{y_k},
\end{aligned} \label{eq.standard PICNN}
\end{equation}
where $\sigma_i$ and $\widetilde{\sigma}_i$ represent layer activation functions, $\bm{y_0}= 0$, $\bm{v_0}= \bm{\widetilde{x}}$, $\bm{W_0^{(y)}}= 0,$ $\bm{\theta}= \{ \bm{W_{1:k-1}^{(y)}}, \bm{W_{1:k-1}^{(yv)}},\bm{W_{0:k-1}^{(x)}},\bm{W_{0:k-1}^{(xv)}} $$,\bm{\widetilde{W}_{0:k-1}},\bm{b^{(y)}_{1:k-1}},$  $\bm{b_{0:k-1}}$ $,\bm{b_{0:k-1}}^{(x)},\bm{\widetilde{b}_{0:k-1}}\}$ are the network parameters and $\circ$ denotes the Hadmard product.
One can show that $f(\bm{\widetilde{x}},\bm{x};\bm{\theta})$ is convex in $\bm{x}$  if $\bm{W_{1:k-1}^{(y)}}$ are nonnegative and the activation functions are convex and non-decreasing~\cite{amos2017input}.
It should be noted that ICNN can only guarantee the convexity and not the strong convexity.

\section{Main Results}\label{sec.main result} 
In this section, we propose a neural backstepping controller (NBS controller) for the trajectory tracking control of~\eqref{eq.manipulator} under the assumption that model information $\bm{M},\bm{C},\bm{G}$ is available.
The design of the tracking controller is given in Section~\ref{sec.controlDesign}.
Its performance is analyzed in Section~\ref{sec.performanceAnalysis} and the final learning optimization problem is formulated in Section~\ref{sec.learningOptFormulation}.

\subsection{Neural Backstepping Tracking Control Design}\label{sec.controlDesign}
Suppose that the reference trajectory is $\bm{q}^d(t)$, which is continuously differentiable.
We aim to design a controller such that the state $\bm{q}(t)$ follows $\bm{q^d}(t)$.
We define the following errors
\begin{equation}
    \begin{aligned}
\bm{z_1}=\bm{q}-\bm{q^{d}}, \bm{z_2}=\bm{\dot{q}}-\bm{\phi},
\end{aligned}\label{eq.errors}
\end{equation}
where $\bm{\phi}$ is a virtual signal to be designed.
The controller $\bm{u}$ and the virtual signal $\bm{\phi}$ are defined as \begin{equation}
       \begin{aligned}
  \bm{u}=&\bm{G}(\bm{q})+\bm{M}(\bm{q})\bm{\dot{\phi}}+\bm{C}(\bm{q},\bm{\dot{q}})\bm{\phi} \\ &\qquad-\frac{\partial \bm{\Phi}(\bm{z_1};\bm{\theta_1},\bm{S})}{\partial \bm{z_1}}-\bm{D}(\bm{z_2};\bm{\theta_2},m)\bm{z_2},\\
  \bm{\phi}=&\bm{\dot{q}^{d}}- \frac{\partial \bm{\Phi}(\bm{z_1};\bm{\theta_1},\bm{S})}{\partial \bm{z_1}}.  
       \end{aligned} \label{eq.trcaking controller}
     \end{equation}

     In the above control design,  $\bm{\Phi}$ is constructed as
\begin{equation}
    \begin{aligned}
        \bm{\Phi}(\bm{z_1};\bm{\theta_1},\bm{S}) = \bm{\psi}(\bm{z_1};\bm{\theta_1})+\bm{z_1}^{\top} \bm{S} \bm{z_1},
    \end{aligned}\label{eq.Phi}
\end{equation}
where $\bm{S}$ is a positive definite matrix; $\bm{\psi}(\bm{z_1};\bm{\theta_1})$ is a FICNN~\eqref{eq.standard FICNN} with input $\bm{z_1}$ and parameters $\bm{\theta}_1$.
We let the parameters $\bm{W_{1:k-1}^{(y)}}$ in $\bm{\psi}(\bm{z_1};\bm{\theta_1})$  be the output of the ReLu function, whose input is some free parameters.
Moreover, we set $\bm{b_i}$ in $\bm{\psi}(\bm{z_1};\bm{\theta_1})$ to $0$, which ensures $\bm{\psi}(0)=0$. 
Furthermore, by selecting sReLu~\cite{kolter2019learning} as the activation function of $\bm{\psi}$, we can ensure that $\bm{\psi}(z_1) \geq 0$, which further guarantees that $\bm{\Phi}$ has only one minimum at $\bm{z_1}=0$, satisfying $\bm{\Phi}(0)=0$.
This design ensures that $\bm{\Phi}$ is strongly convex with respect to the input $\bm{z_1}$.

In~\eqref{eq.trcaking controller}, $\bm{D}(\bm{z_2}; \bm{\theta_2},m)$ is a positive definite matrix, constructed from deep neural networks with parameters $\bm{\theta_2}$, hyperparameter $m$, and input $\bm{z_2}$.
We first use two independent FCNNs to generate the diagonal and off-diagonal elements of a lower triangular matrix $\bm{T}$.
The activation function for each FCNN is chosen as $\tanh(\cdot)$.
We then filter the output of the diagonal elements of $\bm{T}$ through a ReLu function to ensure that the diagonal elements of $\bm{T}$ are nonnegative.
In addition, a positive number $m$ is added to the diagonal elements of $\bm{T}$.
Then $\bm{D}(\bm{z_2}; \bm{\theta_2},m)$ is constructed as $\bm{D}=\bm{T}^\top \bm{T}$.
Through this design, $\bm{D}$ is guaranteed to be positive definite.

   The structure of the tracking controller is illustrated in Fig. \ref{fig:controller diagram}, the Jacobian $\bm{\dot{\phi}},\frac{\partial\bm{\Phi}(\bm{z_1})}{\partial\bm{z_1}}$ are obtained through automatic differentiation.

\begin{figure}
    \centering
 \includegraphics[scale=0.8]{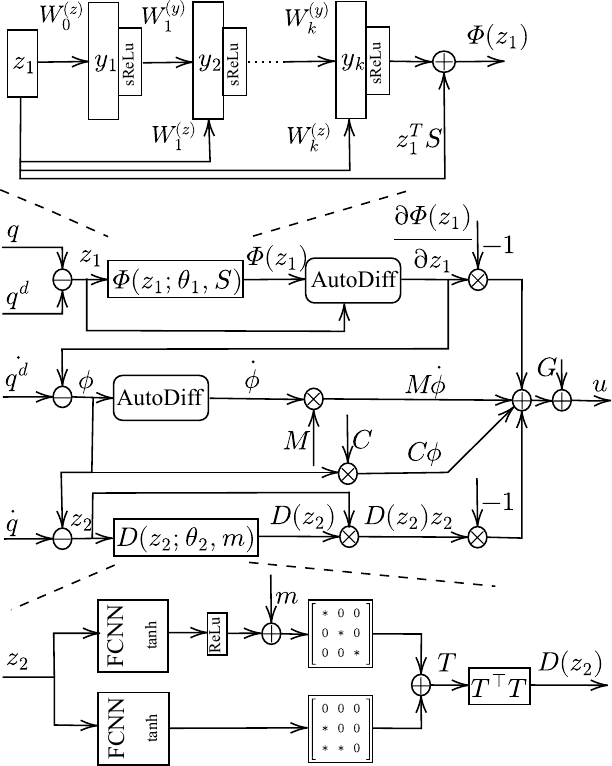}
    \caption {The structure for proposed NBS controller.}
    \label{fig:controller diagram}

\end{figure}
  
\subsection{Stability of the NBS Controller}\label{sec.performanceAnalysis}
In the following theorems, we analyze the performance of the NBS tracking controller. First, we consider the case without disturbances.

\begin{theorem}
    Consider the system~\eqref{eq.manipulator} without disturbance $\bm{\tau^d}$, if the controller is designed as \eqref{eq.trcaking controller}, where $\bm{\Phi}(\bm{z_1})$ is strongly convex in $\bm{z_1}$ with a unique minimum at $\bm{z_1}=\bm{0}$ satisfying $\bm{\Phi}(0)=0$, and $\bm{D}(\bm{z_2})$ is positive definite, then the closed-loop system is globally asymptotically stable at $\bm{z_1}=\bm{0},\bm{z_2}=\bm{0}$.
    \label{Th. stability}
\end{theorem}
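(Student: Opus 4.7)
The plan is to build a Lyapunov function that combines the neural energy term $\bm{\Phi}(\bm{z_1})$ with the kinetic-type term $\tfrac{1}{2}\bm{z_2}^\top \bm{M}(\bm{q}) \bm{z_2}$, exploit Property~\ref{property. M-2C} to cancel the Coriolis contributions, and then invoke LaSalle's invariance principle for the global asymptotic conclusion. This mirrors the standard backstepping Lyapunov construction, with the strong-convexity and positive-definiteness hypotheses on $\bm{\Phi}$ and $\bm{D}$ playing the role normally taken by quadratic gain matrices.

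First, I would derive the closed-loop error dynamics. From $\bm{z_1}=\bm{q}-\bm{q^d}$ and the definition of $\bm{\phi}$ in~\eqref{eq.trcaking controller}, one gets
\begin{equation*}
\dot{\bm{z}}_1 = \bm{z_2} - \tfrac{\partial \bm{\Phi}(\bm{z_1})}{\partial \bm{z_1}}.
\end{equation*}
Substituting the controller $\bm{u}$ into~\eqref{eq.manipulator} (with $\bm{\tau^d}=0$), the $\bm{G}$, $\bm{M}\dot{\bm{\phi}}$, and $\bm{C}\dot{\bm{q}}$ terms cancel against their feedforward counterparts, leaving
\begin{equation*}
\bm{M}(\bm{q})\dot{\bm{z}}_2 = -\bm{C}(\bm{q},\dot{\bm{q}})\bm{z_2} - \tfrac{\partial \bm{\Phi}(\bm{z_1})}{\partial \bm{z_1}} - \bm{D}(\bm{z_2})\bm{z_2}.
\end{equation*}

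Next, I would take the candidate
\begin{equation*}
V(\bm{z_1},\bm{z_2},\bm{q}) = \bm{\Phi}(\bm{z_1}) + \tfrac{1}{2}\bm{z_2}^\top \bm{M}(\bm{q})\bm{z_2}.
\end{equation*}
By the hypothesis that $\bm{\Phi}$ is strongly convex with unique minimizer at $0$ and $\bm{\Phi}(0)=0$, together with Property~\ref{property. M definit} giving $\tfrac{a_1}{2}\|\bm{z_2}\|^2 \leq \tfrac{1}{2}\bm{z_2}^\top \bm{M}\bm{z_2}$, $V$ is positive definite and radially unbounded in $(\bm{z_1},\bm{z_2})$. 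Differentiating along the error dynamics,
\begin{equation*}
\dot{V} = \tfrac{\partial \bm{\Phi}}{\partial \bm{z_1}}^{\!\top}\!\!\dot{\bm{z}}_1 + \bm{z_2}^\top \bm{M}\dot{\bm{z}}_2 + \tfrac{1}{2}\bm{z_2}^\top \dot{\bm{M}}\bm{z_2}.
\end{equation*}
Plugging in the error equations, the cross terms $\tfrac{\partial \bm{\Phi}}{\partial \bm{z_1}}^{\!\top}\bm{z_2}$ and $-\bm{z_2}^\top \tfrac{\partial \bm{\Phi}}{\partial \bm{z_1}}$ cancel, while Property~\ref{property. M-2C} gives $\tfrac{1}{2}\bm{z_2}^\top \dot{\bm{M}}\bm{z_2} - \bm{z_2}^\top \bm{C}\bm{z_2} = 0$. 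The result simplifies to
\begin{equation*}
\dot{V} = -\Bigl\|\tfrac{\partial \bm{\Phi}(\bm{z_1})}{\partial \bm{z_1}}\Bigr\|^2 - \bm{z_2}^\top \bm{D}(\bm{z_2})\bm{z_2} \leq 0,
\end{equation*}
with positive semidefiniteness of each term following from $\bm{D}\succ 0$.

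Finally, to upgrade negative semidefiniteness of $\dot V$ to global asymptotic stability I would invoke LaSalle's invariance principle. On the set $\{\dot V = 0\}$, positive-definiteness of $\bm{D}$ forces $\bm{z_2} = 0$, and strong convexity of $\bm{\Phi}$ with its unique minimum at the origin forces $\tfrac{\partial \bm{\Phi}}{\partial \bm{z_1}} = 0 \Rightarrow \bm{z_1} = 0$; hence the largest invariant set in $\{\dot V = 0\}$ is $\{(\bm{z_1},\bm{z_2}) = (0,0)\}$. Combined with radial unboundedness of $V$, this yields global asymptotic stability of the origin. The main obstacle I foresee is not the Lyapunov computation itself, which is rather mechanical once Properties~\ref{property. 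M definit}--\ref{property. M-2C} are used, but rather justifying that the structural restrictions imposed on the FICNN $\bm{\psi}$ together with the quadratic term $\bm{z_1}^\top \bm{S}\bm{z_1}$ actually deliver the strong convexity and coercivity of $\bm{\Phi}$ that the Lyapunov argument consumes; the quadratic term with $\bm{S}\succ 0$ supplies a uniform strong-convexity modulus even though the FICNN itself is only guaranteed to be convex, and $\bm{\psi}(\bm{z_1})\geq 0$ combined with $\bm{\psi}(0)=0$ ensures the minimum at the origin is unique with value zero.
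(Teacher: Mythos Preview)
Your proposal is correct and follows essentially the same approach as the paper: the same Lyapunov candidate $V=\bm{\Phi}(\bm{z_1})+\tfrac{1}{2}\bm{z_2}^\top\bm{M}\bm{z_2}$, the same use of Properties~\ref{property. M definit}--\ref{property. M-2C} to obtain $\dot V=-\bigl\|\tfrac{\partial\bm{\Phi}}{\partial\bm{z_1}}\bigr\|^2-\bm{z_2}^\top\bm{D}\bm{z_2}$, and the same invocation of strong convexity of $\bm{\Phi}$ and positive definiteness of $\bm{D}$ to conclude that the only point where $\dot V=0$ is the origin. Your explicit naming of LaSalle's invariance principle is a slight sharpening of presentation over the paper, which simply asserts convergence to the zero set of $\dot V$.
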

\begin{proof}For simplifing notations, we shall ignore the DNN parameters $\bm{\theta_1},\bm{\theta_2}$ and hyperparameters $\bm{S},m$ in \eqref{eq.trcaking controller}.    
    According to \eqref{eq.errors} and \eqref{eq.trcaking controller}, ~\eqref{eq.manipulator} can be reformulated as 
\begin{equation}
    \begin{aligned}
\bm{M}\bm{\dot{z}_2}+\bm{C}\bm{z_2}+\bm{G}=\bm{M}\bm{\ddot{q}}-\bm{M}\bm{\dot{\phi}}+\bm{C}\bm{\dot{q}}-\bm{C}\bm{\phi}+\bm{G}\\=\bm{u}-\bm{M}\bm{\dot{\phi}}-\bm{C}\bm{\phi}=\bm{G}-\frac{\partial \bm{\Phi}}{\partial \bm{z_1}}-\bm{D}(\bm{z_2})\bm{z_2},
    \end{aligned}\label{eq.model error}
\end{equation}

Consider the following candidate Lyapunov function
\begin{gather*}
  \bm{V}(\bm{z_1},\bm{z_2})=\bm{\Phi}(\bm{z_1})+\frac12 \bm{z_2}^{\top}\bm{M}\bm{z_2}.
\end{gather*}
Since $\bm{\Phi}(\bm{z_1})$ is strongly convex in $\bm{z_1}$ and has only one minimum at $\bm{z_1}=0$ with $\bm{\Phi}(0)=\bm{0}$, in view of Property \ref{property. M definit}, we have $\bm{V}(0,0)=0,\bm{V}(\bm{z_1},\bm{z_2})>0, \forall \bm{z_1}\neq0,\bm{z_2}\neq0$, and $\|\bm{z_1}\|,\|\bm{z_2}\| \to \infty \Rightarrow \bm{V}(\bm{z_1},\bm{z_2}) \to \infty$. 
The time derivative of $\bm{V}$ is
\begin{align*}
  &\bm{\dot{V}}=\bm{z_2}^{\top}\bm{M}\bm{\dot{z}_2}+\frac12 \bm{z_2}^{\top}\bm{\dot{M}}\bm{z_2}+\bm{\dot{z}}_1^{\top}\frac{\partial \bm{\Phi}}{\partial \bm{z_1}}\\
         &=\bm{z_2}^{\top}\left( \bm{u}-\bm{M}\bm{\dot{\phi}}-\bm{C}\bm{\phi}-\bm{C}\bm{z_2} -\bm{G} +\frac12 \bm{\dot{M}}\bm{z_2}\right)+\bm{\dot{z}_1}^{\top}\frac{\partial \bm{\Phi}}{\partial \bm{z_1}}\\
         &=\bm{z_2}^{\top}\left( -\frac{\partial \bm{\Phi}}{\partial \bm{z_1}}-\bm{D}\bm{z_2}  \right)+\bm{\dot{z}_1}^{\top} \frac{\partial \bm{\Phi}}{\partial \bm{z_1}}+\frac{1}{2}\bm{z_2}^{\top}(\bm{\dot{M}}-2\bm{C})\bm{z_2}.
\end{align*}
Based on the definitions of~\eqref{eq.errors}, we can derive $\bm{z_2}=\bm{\dot{q}}-\bm{\dot{q}^{d}}+ \frac{\partial \bm{\Phi}}{\partial \bm{z_1}}=\bm{\dot{z}_1}+\frac{\partial \bm{\Phi}}{\partial \bm{z_1}}$. 
According to Property \ref{property. M-2C} and the positive definiteness of $\bm{D}$, we have
\begin{gather*}
  \bm{\dot{V}}=-\frac{\partial \bm{\Phi}}{\partial \bm{z_1}}^{\top} \frac{\partial \bm{\Phi}}{\partial \bm{z_1}}-\bm{z_2}^{\top}\bm{D}\bm{z_2}<0.
\end{gather*}
Therefore, we can conclude that $\bm{z}_1, \bm{z}_2$ will converge to $\{\bm{z_1},\bm{z_2}|\frac{\partial \bm{\Phi}}{\partial \bm{z_1}}^{\top} \frac{\partial \bm{\Phi}}{\partial \bm{z_1}}+\bm{z_2}^{\top}\bm{D}\bm{z_2}=0\}$.
Since $\bm{\Phi}(\bm{z_1})$ is strongly convex in $\bm{z_1}$ and is with a minimum at $\bm{z_1}=\bm{0}$, $\bm{D}$ is positive, we have that $\frac{\partial \bm{\Phi}}{\partial \bm{z_1}}^{\top} \frac{\partial \bm{\Phi}}{\partial \bm{z_1}}+\bm{z_2}^{\top}\bm{D}\bm{z_2}=0$ if and only if $\bm{z_1}=0$ and $\bm{z_2}=0$. Therefore, we have
$  \lim_{t\rightarrow \infty}\bm{z_1}=\bm{0},\lim_{t\rightarrow \infty}\bm{z_2}=\bm{0}.$
\end{proof}

Theorem \ref{Th. stability} demonstrates the stability of the closed-loop system under the NBS tracking controller when there are no disturbances. It is unconditionally stable for all compatible neural network parameters.
That is, the controller can guarantee closed-loop stability as long as the neural networks $\bm{\Phi}$ and $\bm{D}$ satisfy the conditions in Theorem~\ref{Th. stability}, irrespective of the neural network parameters.
In the following theorem, we will show that the NBS tracking controller can achieve a bounded tracking error in the presence of bounded disturbances. 

\begin{theorem}
Consider the system~\eqref{eq.manipulator} with a bounded disturbance $\|\bm{\tau^{d}}\|^2\le d$. If the controller is designed as~\eqref{eq.trcaking controller} and $\bm{D}(\bm{z_2})\ge \frac12 \bm{I}$, then the tracking error will converge to $\{\bm{z_1}|\frac{\partial \bm{\Phi}}{\partial \bm{z_1}}^{\top} \frac{\partial \bm{\Phi}}{\partial \bm{z_1}}\le \frac12 d \}$.
\label{Th. disturbance}
\end{theorem}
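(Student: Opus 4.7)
The plan is to reuse the Lyapunov function $\bm{V}(\bm{z_1},\bm{z_2})=\bm{\Phi}(\bm{z_1})+\tfrac12\bm{z_2}^{\top}\bm{M}\bm{z_2}$ from Theorem~\ref{Th. stability} and simply track how the disturbance perturbs the clean derivative computation already carried out there. The controller~\eqref{eq.trcaking controller} is unchanged, so the cancellations involving $\bm{G}$, $\bm{M}\bm{\dot{\phi}}$ and $\bm{C}\bm{\phi}$, the skew-symmetry argument based on Property~\ref{property. M-2C}, and the substitution $\bm{\dot{z}_1}=\bm{z_2}-\partial\bm{\Phi}/\partial\bm{z_1}$ all go through as before. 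The only new contribution to the closed-loop dynamics~\eqref{eq.model error} is an additive $\bm{\tau^d}$ term on the right-hand side, which propagates to a single extra term $\bm{z_2}^{\top}\bm{\tau^d}$ in $\bm{\dot V}$.

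Next I would bound this cross term with Young's inequality in the form
\begin{equation*}
\bm{z_2}^{\top}\bm{\tau^d}\le\tfrac12\|\bm{z_2}\|^2+\tfrac12\|\bm{\tau^d}\|^2\le\tfrac12\|\bm{z_2}\|^2+\tfrac12 d,
\end{equation*}
where the disturbance bound $\|\bm{\tau^d}\|^2\le d$ is used. Substituting into the expression for $\bm{\dot V}$ inherited from Theorem~\ref{Th. stability} gives
\begin{equation*}
\bm{\dot V}\le-\tfrac{\partial\bm{\Phi}}{\partial\bm{z_1}}^{\!\top}\tfrac{\partial\bm{\Phi}}{\partial\bm{z_1}}-\bm{z_2}^{\top}\!\bigl(\bm{D}-\tfrac12\bm{I}\bigr)\bm{z_2}+\tfrac12 d,
\end{equation*}
and the hypothesis $\bm{D}(\bm{z_2})\ge\tfrac12\bm{I}$ ensures the middle quadratic form is nonnegative, leaving the clean estimate $\bm{\dot V}\le-\|\partial\bm{\Phi}/\partial\bm{z_1}\|^{2}+\tfrac12 d$.

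Finally, I would invoke a standard ultimate-boundedness argument: whenever $\|\partial\bm{\Phi}/\partial\bm{z_1}\|^{2}>\tfrac12 d$ we have $\bm{\dot V}<0$, so $\bm{V}$ is strictly decreasing outside this region and trajectories must enter, and remain in, the level set in which $\|\partial\bm{\Phi}/\partial\bm{z_1}\|^{2}\le\tfrac12 d$, which is exactly the target set for $\bm{z_1}$. The one subtlety worth being careful about is the asymmetric treatment of $\bm{z_1}$ and $\bm{z_2}$: the Young's-inequality split wastes all dissipation available from $\bm{D}$ on absorbing the $\|\bm{z_2}\|^2$ term, leaving only the $\partial\bm{\Phi}/\partial\bm{z_1}$ term to compete with $d$; this is why the conclusion is naturally phrased in terms of $\partial\bm{\Phi}/\partial\bm{z_1}$ rather than $\bm{z_1}$ itself, and is the only mildly delicate bookkeeping step in an otherwise routine extension of Theorem~\ref{Th. stability}.
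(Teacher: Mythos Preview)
Your proposal is correct and follows essentially the same approach as the paper: the same Lyapunov function, the same Young's-inequality split $\bm{z_2}^{\top}\bm{\tau^d}\le\tfrac12\|\bm{z_2}\|^2+\tfrac12\|\bm{\tau^d}\|^2$, and the same use of $\bm{D}\ge\tfrac12\bm{I}$ to absorb the extra $\tfrac12\|\bm{z_2}\|^2$ term, leading to $\bm{\dot V}\le-\|\partial\bm{\Phi}/\partial\bm{z_1}\|^2+\tfrac12 d$. Your explicit remarks on the ultimate-boundedness step and the asymmetric handling of $\bm{z_1}$ versus $\bm{z_2}$ are actually more detailed than the paper's own one-line conclusion.
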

\begin{proof} Consider the candidate Lyapunov function
    \begin{align*}
        V(\bm{z_1},\bm{z_2})=\bm{\Phi}(\bm{z_1})+\frac{1}{2}\bm{z_2}^{\top}\bm{M}\bm{z_2}.
    \end{align*}
    The time derivative of $V$ is 
\begin{align*}
    \bm{\dot{V}}&=-\frac{\partial \bm{\Phi}}{\partial \bm{z_1}}^{\top} \frac{\partial \bm{\Phi}}{\partial \bm{z_1}}-\bm{z_2}^{\top}\bm{D}(\bm{z_2})\bm{z_2} +\bm{z_2}^{\top}\bm{\tau^{d}}\\
    &\le -\frac{\partial \bm{\Phi}}{\partial \bm{z_1}}^{\top} \frac{\partial \bm{\Phi}}{\partial \bm{z_1}}-\bm{z_2}^{\top}\bm{D}(\bm{z_2})\bm{z_2} +\frac12 \bm{z_2}^{\top}\bm{z_2}+\frac12 \|\bm{\tau^{d}}\|^2.
\end{align*}
Since $\bm{D}(\bm{z_2})\ge \frac12 \bm{I}$, we have
$    \bm{\dot{V}}\le -\frac{\partial \bm{\Phi}}{\partial \bm{z_1}}^{\top} \frac{\partial \bm{\Phi}}{\partial \bm{z_1}}+\frac12 d.$
Then $\bm{z_1}$ will converge to the set $\{\bm{z_1}|\frac{\partial \bm{\Phi}}{\partial \bm{z_1}}^{\top} \frac{\partial \bm{\Phi}}{\partial \bm{z_1}}\le \frac12 d \}$.
\end{proof}

We can achieve $\bm{D}(\bm{z_2})\ge \frac12 \bm{I}$ by adjusting the hyperparameter $m$.
In Theorem \ref{Th. disturbance}, the bound of the tracking error is given as a function of the Jacobian of $\bm{\Phi}$.
In the following, we will provide an explicit tracking error bound assuming structures of $\bm{\Phi}$. 

\begin{theorem}
  Consider the system~\eqref{eq.manipulator} with a bounded disturbance $\|\bm{\tau^{d}}\|^2\le d$.
  If $\bm{D}(\bm{z_2})\ge \frac12 \bm{I}$ and $ \left .\frac{ \partial^2\bm{\Phi}}{\partial \bm{z_1}^2}\right|_{\bm{z_1}=0} \ge \alpha \bm{I} $, then the tracking error under the controller~\eqref{eq.trcaking controller}  will converge to the set $\{ \bm{z_1}| \|\bm{z_1}\|^2\le \frac12 \frac{1}{\alpha^2}d\}$. 
\label{Th. disturbance_regularizer}
\end{theorem}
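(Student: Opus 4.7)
The plan is to layer this result on top of Theorem \ref{Th. disturbance}. That theorem already shows, under the same disturbance bound and the condition $\bm{D}(\bm{z_2})\ge \tfrac{1}{2}\bm{I}$, that $\bm{z_1}$ converges to the sub-level set $\{\bm{z_1}\mid \|\tfrac{\partial \bm{\Phi}}{\partial \bm{z_1}}\|^2 \le \tfrac{1}{2}d\}$. So the remaining work is purely geometric: translate a bound on the gradient of $\bm{\Phi}$ into a bound on $\bm{z_1}$ itself, using the curvature hypothesis on $\bm{\Phi}$.

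First I would record the two structural facts about $\bm{\Phi}$ that are baked into the construction \eqref{eq.Phi}: the minimum is at $\bm{z_1}=\bm{0}$ with $\bm{\Phi}(0)=0$, so $\frac{\partial \bm{\Phi}}{\partial \bm{z_1}}\big|_{\bm{z_1}=0}=\bm{0}$, and the Hessian bound. Note that by construction $\bm{\Phi}=\bm{\psi}+\bm{z_1}^{\top}\bm{S}\bm{z_1}$ with $\bm{\psi}$ an FICNN (so $\nabla^2\bm{\psi}\succeq \bm{0}$ everywhere), hence $\nabla^2\bm{\Phi}(\bm{z_1})\succeq 2\bm{S}$ uniformly in $\bm{z_1}$; this is the global $\alpha$-strong convexity that I would invoke (taking $\alpha \le 2\lambda_{\min}(\bm{S})$), making the hypothesis in the statement effectively a global Hessian lower bound.

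Next I would invoke the standard consequence of $\alpha$-strong convexity for a differentiable convex function with minimizer at $\bm{0}$: for any $\bm{z_1}$,
\begin{equation*}
\left\|\frac{\partial \bm{\Phi}}{\partial \bm{z_1}}(\bm{z_1}) - \frac{\partial \bm{\Phi}}{\partial \bm{z_1}}(\bm{0})\right\| \ge \alpha \|\bm{z_1}\|,
\end{equation*}
which, since the gradient at $\bm{0}$ vanishes, gives $\|\frac{\partial \bm{\Phi}}{\partial \bm{z_1}}\|^2 \ge \alpha^2 \|\bm{z_1}\|^2$. Combining this with the gradient sub-level set from Theorem \ref{Th. disturbance} yields $\alpha^2\|\bm{z_1}\|^2 \le \tfrac{1}{2}d$, i.e. $\|\bm{z_1}\|^2 \le \tfrac{1}{2\alpha^2}d$, as claimed. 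One can establish the inequality above quickly by writing $\frac{\partial \bm{\Phi}}{\partial \bm{z_1}}(\bm{z_1}) = \int_0^1 \nabla^2\bm{\Phi}(s\bm{z_1})\,\bm{z_1}\,ds$ and using $\nabla^2\bm{\Phi}\succeq \alpha \bm{I}$, or by appealing to the equivalence of $\alpha$-strong convexity with $\alpha$-strong monotonicity of the gradient.

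The main obstacle is the mismatch between the stated hypothesis (Hessian lower bound only at $\bm{z_1}=\bm{0}$) and what the argument really needs (a global lower bound). I would resolve this by reading the hypothesis in conjunction with the earlier construction of $\bm{\Phi}$: the FICNN part contributes a positive semidefinite Hessian everywhere, and the quadratic regularizer $\bm{z_1}^{\top}\bm{S}\bm{z_1}$ contributes $2\bm{S}$ uniformly, so the pointwise bound at the origin propagates globally. With that observation in place, the rest of the proof is a short two-line calculation chaining Theorem \ref{Th. disturbance} with the gradient-growth inequality.
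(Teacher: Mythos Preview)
Your proposal follows the same two-step plan as the paper: first invoke Theorem~\ref{Th. disturbance} to land in the gradient sub-level set $\{\bm{z_1}\mid \|\partial\bm{\Phi}/\partial\bm{z_1}\|^2\le \tfrac12 d\}$, then convert the gradient bound into a bound on $\|\bm{z_1}\|$ via the Hessian hypothesis. The difference is in how that second step is executed. The paper Taylor-expands $\partial\bm{\Phi}/\partial\bm{z_1}$ at the origin, writes it as $\nabla^2\bm{\Phi}(0)\,\bm{z_1}+\delta$, and then simply drops the remainder $\delta$ to obtain $\alpha^2\|\bm{z_1}\|^2\le \tfrac12 d$. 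Your argument instead reads the pointwise hypothesis together with the construction \eqref{eq.Phi} (FICNN convexity plus the quadratic term $\bm{z_1}^\top\bm{S}\bm{z_1}$) to get a \emph{global} lower bound $\nabla^2\bm{\Phi}\succeq\alpha\bm{I}$, and then applies the standard strong-convexity gradient-growth inequality $\|\nabla\bm{\Phi}(\bm{z_1})\|\ge\alpha\|\bm{z_1}\|$. This is the more rigorous route: the paper's discarding of $\delta$ is precisely what your global Hessian bound justifies, and without that upgrade the Taylor argument is only heuristic. So the approaches coincide in structure, but yours closes a gap the paper leaves open.
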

\begin{proof}The Taylor expansion of $\frac{\partial \bm{\Phi}}{\partial \bm{z_1}}$ at $\bm{z_1}=\bm{0}$ is 
\begin{equation}
    \begin{aligned}
    \frac{\partial \bm{\Phi}}{\partial \bm{z_1}}=\left .\frac{ \partial^2\bm{\Phi}}{\partial \bm{z_1}^2}\right|_{\bm{z_1}=\bm{0}}  \bm{z_1} +\delta
\end{aligned} \label{eq.taylor expansion}
\end{equation}
where $\delta$ contains high-order terms.
We can approximate $\frac{\partial \bm{\Phi}}{\partial \bm{z_1}}$ with $\left .\frac{ \partial^2\bm{\Phi}}{\partial \bm{z_1}^2}\right|_{\bm{z_1}=\bm{0}}  \bm{z_1}$.
Then from $\frac{\partial \bm{\Phi}}{\partial \bm{z_1}}^{\top} \frac{\partial \bm{\Phi}}{\partial \bm{z_1}}\le \frac12 d$, we have $\bm{z_1}^{\top} \left( \left .\frac{ \partial^2\bm{\Phi}}{\partial \bm{z_1}^2}\right|_{\bm{z_1}=\bm{0}}\right)^2 \bm{z_1} \le \frac12 d$. 
Since $\left .\frac{ \partial^2\bm{\Phi}}{\partial \bm{z_1}^2}\right|_{\bm{z_1}=\bm{0}}\ge \alpha \bm{I}$, therefore $\alpha^2 \|\bm{z_1}\|^2 \le \frac12 d$, that is, $\|\bm{z_1}\|^2\le \frac12 \frac{1}{\alpha^2}d$. 
\end{proof}
    
These analyses imply that we can constrain the Hessian matrix of $\bm{\Phi}$ at $\bm{z_1}=\bm{0}$ to improve the tracking error performance under disturbances.
This can be achieved by adding a regularizer $\text{ReLu}\left(\lambda_{\max}\left(\alpha \bm{I} - \left .\frac{ \partial^2\bm{\Phi}}{\partial \bm{z_1}^2}\right|_{\bm{z_1}=\bm{0}} \right) \right )$ during training or by letting the hyperparameter $\bm{S}$ satisfy $\bm{S} \geq \alpha\bm{I}$ in $\bm{\Phi}(\bm{z_1};\bm{\theta_1},\bm{S})$. 

\subsection{Learning Optimization Formulation}\label{sec.learningOptFormulation}

Since the controller \eqref{eq.trcaking controller} is stable for all compatible DNN parameters, we can further optimize the performance by optimizing the DNN parameters.
The optimization we are solving is the following. 

\begin{equation}
    \begin{aligned}
        \min_{\bm{\theta_1},\bm{\theta_2}} \quad & \int_{t=0}^{T} l_t(\bm{z_1}, \bm{u})dt\\
        &\qquad +\text{ReLu}\left(\lambda_{\max}\left(\alpha \bm{I} - \left .\frac{ \partial^2\bm{\Phi}}{\partial \bm{z_1}^2}\right|_{\bm{z_1}=\bm{0}} \right) \right )  \\
     \mathrm{s.t.} \quad&\text{dynamics } \bm{M}(\bm{q})\bm{\ddot{q}}+\bm{C}(\bm{q},\bm{\dot{q}})\bm{\dot{q}}+\bm{G}(\bm{q})=\bm{u}, \\
      &\text{errors } \eqref{eq.errors}, \quad
      \text{controller }\eqref{eq.trcaking controller},\\
    & \text{initial state } \bm{q}(0),\bm{\dot{q}}(0),
    \end{aligned}\label{eq.optimization}
\end{equation}
where $l_t$ denotes the stage cost at time $t$.
The above optimization problem can be solved by discretizing the dynamics and the cost function first and then numerically solving the discrete-time counterpart. 

\section{LNNs based Model Approximation and Controller Design}\label{sec.LNN based}
In Section~\ref{sec.main result}, the inertia matrix $\bm{M}$, the Coriolis matrix $\bm{C}$, and the gravitational force vector $\bm{G}$ of the model are used to design the NBS tracking controller.
However, these terms are difficult to obtain for complex systems.
In this section, we use LNN $\bm{\mathcal{L}}(\bm{q},\bm{\dot{q}};\bm{\gamma})$ to learn the Lagrangian of the system from data and then construct approximated inertia matrix, Coriolis matrix and gravitational force vectors to design the controller.
To ensure a better approximation of the Lagrangian function, we proposed a modified LNN structure, which can guarantee that Property \ref{property. M definit} and Property \ref{property. M-2C} hold for the approximated inertia matrix and the Coriolis matrix.

\subsection{Modified LNN for Learning Dynamics}
In LNNs,  FCNNs are used to represent the Lagrangian function.
In doing so, the approximated inertia matrix $\bm{\hat{M}}=\frac{\partial^{2}\mathcal{L}(\bm{q},\bm{\dot{q}};\bm{\gamma^*})}{\partial \bm{\dot{q}}\partial\bm{\dot{q}}}$ and the Coriolis matrix $\bm{\hat{C}}=\bm{\dot{\hat{M}}}(\bm{q})-\frac{1}{2}\bm{\dot{\hat{M}}}(\bm{q})^{\top}$may not satisfy the Property \ref{property. M definit} and Property \ref{property. M-2C}.
To solve this problem, we use an ICNN $\mathcal{L}_T(\bm{q},\bm{\dot{q}};\bm{\gamma_1})$ with the parameter $\bm{\gamma_1}$ to approximate the kinetic energy $T(\bm{q},\bm{\dot{q}})$ in $\mathcal{L}$, which is convex w.r.t. $\bm{\dot{q}}$.
Moreover, we use an FCNN $\mathcal{L}_V(\bm{q};\bm{\gamma_2})$ with the parameter $\bm{\gamma_2}$ and input $\bm{q}$ to learn $V(\bm{q})$ in $\mathcal{L}$.
Then we construct the Lagrangian neural network as follows. 
\begin{align*}
    \mathcal{L}(\bm{q},\bm{\dot{q}};\bm{\gamma})=\mathcal{L}_T(\bm{q},\bm{\dot{q}};\bm{\gamma_1})-\mathcal{L}_V(\bm{q};\bm{\gamma_2}),
\end{align*}
where $\bm{\gamma}=\{\bm{\gamma_1},\bm{\gamma_2}\}$,  the activation function we choose is softplus.
Through this design, we can ensure that $\bm{\hat{M}}=\frac{\partial^{2}\mathcal{L}(\bm{q},\bm{\dot{q}};\bm{\gamma^*})}{\partial \bm{\dot{q}}\partial\bm{\dot{q}}}=\frac{\partial^{2}\mathcal{L}_T(\bm{q},\bm{\dot{q}};\bm{\gamma^*})}{\partial \bm{\dot{q}}\partial\bm{\dot{q}}}$ is positive definite and $\bm{\dot{\hat{M}}}-2\bm{\hat{C}}$ is skew-symmetric.

Using LNNs to approximate the dynamic system model may introduce approximation errors.
With $\bm{\hat{M}}, \bm{\hat{C}}, \bm{\hat{G}}$, the system dynamics can be modeled as 
    \begin{equation}
    \begin{aligned}
\bm{\hat{M}}\bm{\ddot{q}}+\bm{\hat{C}}\bm{\dot{q}}+\bm{\hat{G}}+\delta(\bm{q},\bm{\dot{q}})=\bm{u}+\bm{\tau^d},
    \end{aligned} \label{eq.LNNs-based uncertain system}
\end{equation}
where $\delta(\bm{q},\bm{\dot{q}})=-\bm{\hat{M}}\bm{M}^{-1}\{\bm{u}+\bm{\tau^d}-\bm{C}\bm{\dot{q}}-\bm{G}\}+\{\bm{u}+\bm{\tau^d}-\bm{\hat{C}}\bm{\dot{q}}-\bm{\hat{G}}\}$ is the model mismatch.

Since $\delta$ is a function of $\bm{q}$ and $\bm{\dot{q}}$, we can employ the first-order Taylor expansion and assume that the uncertain term satisfies the inequality
\begin{equation}
    \begin{aligned}
        \|\delta(\bm{q},\bm{\dot{q}})\| \leq a\|\bm{q}\|+b\|\bm{\dot{q}}\|+c,
    \end{aligned} \label{eq.LNNs-based uncertain system uncertain constraints}
\end{equation}
where $a,b,c$ are positive constants. 

\subsection{Controller Design and Performance Analysis}
With the approximate model parameters, the NBS tracking controller~\eqref{eq.trcaking controller} is modified to
\begin{equation}
    \begin{aligned}
\bm{u}=&\bm{\hat{G}}+\bm{\hat{M}}\bm{\dot{\phi}}+\bm{\hat{C}}\bm{\phi}- \frac{\partial \bm{\Phi}(\bm{z_1};\bm{\theta_1},\bm{S})}{\partial \bm{z_1}} -\bm{D}(\bm{z_2};\bm{\theta_2})\bm{z_2},\\
\bm{\phi}=&\bm{\dot{q}^{d}}- \frac{\partial \bm{\Phi}(\bm{z_1};\bm{\theta_1},\bm{S})}{\partial \bm{z_1}}.
    \end{aligned}\label{eq.LNNs controller}
\end{equation}
In the following theorem, we demonstrate that the modified NBS tracking controller can achieve bounded tracking error in the presence of modeling uncertainties.

\begin{theorem}
  Consider the system~\eqref{eq.LNNs-based uncertain system} with a bounded disturbance $\|\bm{\tau^{d}}\|^2\le d$,  and the model uncertainty satisfies~\eqref{eq.LNNs-based uncertain system uncertain constraints}. If the controller is designed as~\eqref{eq.LNNs controller}, with $\bm{D} \geq (b+\frac{b^2}{2}+\frac{a}{2}+1)\bm{I},\left .\frac{ \partial^2\bm{\Phi}}{\partial \bm{z_1}^2}\right|_{\bm{z_1}=0} \ge \alpha \bm{I} $ and $\alpha>\sqrt{a}$,  then the tracking error $z_1$ will converge to the set $\{\bm{z}_1|\|\bm{z_1}\|^2\leq\frac{k^2+d}{\alpha^2-a}\}$, where $k= \max(c+a\|\bm{q^d}\|+b\|\bm{\dot{q}^d}\|)$.
    \label{Th. LNNs_uncertainty_regularizer}
\end{theorem}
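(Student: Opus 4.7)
The plan is to mirror the Lyapunov arguments of Theorems 2--4 using the candidate $V(\bm{z_1},\bm{z_2}) = \bm{\Phi}(\bm{z_1}) + \tfrac12 \bm{z_2}^\top \bm{\hat M}\bm{z_2}$ built from the \emph{approximate} inertia matrix. The modified LNN structure (Section IV.A) guarantees Properties \ref{property. M definit} and \ref{property. M-2C} hold for $\bm{\hat M}$ and $\bm{\hat C}$, so $V$ is still positive definite and radially unbounded, and the skew-symmetry of $\dot{\bm{\hat M}}-2\bm{\hat C}$ remains available when differentiating. Substituting the controller \eqref{eq.LNNs controller} into \eqref{eq.LNNs-based uncertain system} yields the closed-loop error dynamics $\bm{\hat M}\dot{\bm z_2} + \bm{\hat C}\bm z_2 = -\tfrac{\partial \bm\Phi}{\partial \bm{z_1}} - \bm D \bm z_2 - \delta + \bm{\tau^d}$, and the same computation as in Theorems~2--3 then gives
\[
\dot V = -\Bigl\|\tfrac{\partial \bm\Phi}{\partial \bm{z_1}}\Bigr\|^2 - \bm z_2^\top \bm D \bm z_2 - \bm z_2^\top \delta + \bm z_2^\top \bm{\tau^d}.
\]

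Next I would bound the two cross terms. Writing $\bm q = \bm{z_1}+\bm{q^d}$ and $\bm{\dot q} = \bm{z_2}+\bm{\dot q^d}-\tfrac{\partial \bm\Phi}{\partial \bm{z_1}}$, the uncertainty bound \eqref{eq.LNNs-based uncertain system uncertain constraints} produces $\|\delta\| \le a\|\bm{z_1}\| + b\|\bm{z_2}\| + b\|\tfrac{\partial \bm\Phi}{\partial \bm{z_1}}\| + k$ with $k=\max\!\big(c+a\|\bm{q^d}\|+b\|\bm{\dot q^d}\|\big)$. I then apply Young's inequality termwise to $\|\bm z_2\|\|\delta\|$ and to $\bm z_2^\top \bm{\tau^d}$, pairing $a\|\bm z_2\|\|\bm z_1\|$ with coefficient $\tfrac{a}{2}$ on each, $b\|\bm z_2\|\|\tfrac{\partial\bm\Phi}{\partial\bm{z_1}}\|$ with coefficient $\tfrac{b^2}{2}$ on $\|\bm z_2\|^2$ and $\tfrac12$ on $\|\tfrac{\partial\bm\Phi}{\partial\bm{z_1}}\|^2$, and $k\|\bm z_2\|$ and $\|\bm z_2\|\|\bm{\tau^d}\|$ each with coefficient $\tfrac12$. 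The total $\|\bm z_2\|^2$ penalty is exactly $b+\tfrac{b^2}{2}+\tfrac{a}{2}+1$, which is precisely the threshold on $\bm D$, so the $\bm z_2^\top \bm D \bm z_2$ term absorbs it. What survives is
\[
\dot V \;\le\; -\tfrac12\Bigl\|\tfrac{\partial \bm\Phi}{\partial \bm{z_1}}\Bigr\|^2 + \tfrac{a}{2}\|\bm{z_1}\|^2 + \tfrac12 k^2 + \tfrac12 d.
\]

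Finally, I would invoke the Taylor expansion argument from Theorem~\ref{Th. disturbance_regularizer}: the hypothesis $\tfrac{\partial^2 \bm\Phi}{\partial \bm{z_1}^2}\big|_{\bm{z_1}=0}\succeq \alpha \bm I$ (combined with the structural convexity of $\bm\Phi$) gives $\|\tfrac{\partial \bm\Phi}{\partial \bm{z_1}}\|^2 \ge \alpha^2 \|\bm{z_1}\|^2$, so
\[
\dot V \;\le\; -\tfrac{\alpha^2-a}{2}\|\bm{z_1}\|^2 + \tfrac12(k^2+d).
\]
Since $\alpha>\sqrt a$, this is strictly negative whenever $\|\bm{z_1}\|^2 > \tfrac{k^2+d}{\alpha^2-a}$, so by standard Lyapunov invariance arguments $\bm{z_1}$ enters and remains in the claimed set.

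The main obstacle I expect is the bookkeeping in the second step: one must decompose $\bm{\dot q}$ in terms of $\bm z_2$, $\bm{\dot q^d}$, and $\tfrac{\partial \bm\Phi}{\partial \bm{z_1}}$ before applying \eqref{eq.LNNs-based uncertain system uncertain constraints}, and then choose the Young's-inequality weights so that the coefficient of $\|\bm z_2\|^2$ lands exactly at $b+\tfrac{b^2}{2}+\tfrac{a}{2}+1$ and the coefficient of $\|\tfrac{\partial\bm\Phi}{\partial\bm{z_1}}\|^2$ stays at $\tfrac12$ so that half of the original $-\|\tfrac{\partial\bm\Phi}{\partial\bm{z_1}}\|^2$ remains to feed the Taylor step. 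Everything else is a direct re-run of the earlier proofs with $\bm M,\bm C,\bm G$ replaced by their hatted counterparts.
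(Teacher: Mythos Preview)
Your proposal is correct and follows essentially the same approach as the paper: the same Lyapunov candidate $V=\bm\Phi(\bm z_1)+\tfrac12\bm z_2^\top\bm{\hat M}\bm z_2$, the same decomposition of $\|\delta\|$ via $\bm q=\bm z_1+\bm q^d$ and $\bm{\dot q}=\bm z_2+\bm{\dot q^d}-\tfrac{\partial\bm\Phi}{\partial\bm z_1}$, the same Young's-inequality splittings yielding the $b+\tfrac{b^2}{2}+\tfrac{a}{2}+1$ threshold on $\bm D$, and the same Taylor-expansion step from Theorem~\ref{Th. disturbance_regularizer} to finish. Your coefficient bookkeeping is in fact cleaner than the paper's (which contains a couple of typographical slips in the final two displayed lines).
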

\begin{proof}
    From \eqref{eq.errors} we have
\begin{align*}
    \|\bm{q}\|\leq \|\bm{z_1}\| +\|\bm{q^d}\|, \quad 
    \|\bm{\dot{q}}\| \leq \|\bm{z_2}\|+\|\bm{\dot{q}^d}\|+\|\frac{\partial \bm{\Phi}(\bm{z_1})}{\partial \bm{z_1}}\|.
\end{align*}
    According to \eqref{eq.LNNs-based uncertain system uncertain constraints}, one can obtain
        $\|\delta(\bm{q},\bm{\dot{q}})\| \leq a\|\bm{z_1}\|+b\|\bm{z_2}\|+b\|\frac{\partial \bm{\Phi}(\bm{z_1})}{\partial \bm{z_1}}\|+c+a\|\bm{q^d}\|+b\|\bm{\dot{q}^d}\|.$
Since $\bm{q^d}, \bm{\dot{q}^d}$ are bounded reference states, we can define $k= \max(c+a\|\bm{q^d}\|+b\|\bm{\dot{q}^d}\|)$. Then we have the following.
    \begin{align*}
        &\|\delta(\bm{q},\bm{\dot{q}})\| \leq a\|\bm{z_1}\|+b\|\bm{z_2}\|+b\|\frac{\partial \bm{\Phi}(\bm{z_1})}{\partial \bm{z_1}}\|+k.
    \end{align*}
Consider the following Lyapunov candidate
    \begin{align*}
        V(\bm{z_1},\bm{z_2})=\Phi(\bm{z_1})+\frac{1}{2}\bm{z_2}^{\top}\bm{\hat{M}}\bm{z_2}
    \end{align*}
The time derivative of $V$ is 
    \begin{align*}
        \dot{V}=&\bm{z_2}^{\top}\left( -\frac{\partial \bm{\Phi}}{\partial \bm{z_1}}-\bm{D}\bm{z_2} +\bm{\tau^d}-\delta \right)+\bm{\dot{z}_1}^{\top} \frac{\partial \bm{\Phi}}{\partial \bm{z_1}}\\
        =&-\frac{\partial \bm{\Phi}}{\partial \bm{z_1}}^{\top} \frac{\partial \bm{\Phi}}{\partial \bm{z_1}}-\bm{z_2}^{\top}\bm{D}\bm{z_2}+\bm{z_2}^{\top}(\bm{\tau^d}-\delta)\\
         \leq& -\frac{\partial \bm{\Phi}}{\partial \bm{z_1}}^{\top} \frac{\partial \bm{\Phi}}{\partial \bm{z_1}} -\lambda_{\min}(\bm{D})\|\bm{z_2}\|^2 +\|\bm{\tau^d}\|\|\bm{z_2}\|+b\|\bm{z_2}\|^2\\
         &+a\|\bm{z_1}\|\|\bm{z_2}\|+k\|\bm{z_2}\|+b\|\bm{z_2}\|\|\frac{\partial \bm{\Phi}}{\partial \bm{z_1}}\| \\
        \leq &-\frac{\partial \bm{\Phi}}{\partial \bm{z_1}}^{\top} \frac{\partial \bm{\Phi}}{\partial \bm{z_1}}+\frac{1}{2}\frac{\partial \bm{\Phi}}{\partial \bm{z_1}}^{\top} \frac{\partial \bm{\Phi}}{\partial \bm{z_1}}+\frac{a}{2}\|\bm{z_1}\|^2+\frac{k^2}{2}+\frac{1}{2}\|\bm{\tau^d}\|^2 \\
        &-(\lambda_{\min}(\bm{D})-b-\frac{b^2}{2}-\frac{a}{2}-1)\|\bm{z_2}\|^2 \\
        =&-\frac{1}{2}\frac{\partial \bm{\Phi}}{\partial \bm{z_1}}^{\top} \frac{\partial \bm{\Phi}}{\partial \bm{z_1}}+\frac{a}{2}\|\bm{z_1}\|^2+\frac{k^2+d}{2}\\
        &-(\lambda_{\min}(\bm{D})-b-\frac{b^2}{2}-\frac{a}{2}-1)\|\bm{z_2}\|^2
    \end{align*}
Since $\bm{D}\ge(b+\frac{b^2}{2}+\frac{a}{2}+1)\bm{I}$, we have 
    $ h=\lambda_{\min}(\bm{D})-b-\frac{b^2}{2}-\frac{a}{2}-1\geq 0.$
Therefore,
 \begin{align*}
     \dot{V} &\leq -\frac{1}{2}\frac{\partial \bm{\Phi}}{\partial \bm{z_1}}^{\top} \frac{\partial \bm{\Phi}}{\partial \bm{z_1}}+\frac{a}{2}\|\bm{z_1}\|^2+\frac{k^2+d}{2}.
 \end{align*}



  Similar to the proof of Theorem~\ref{Th. disturbance_regularizer}, if  $\left .\frac{ \partial^2\bm{\Phi}}{\partial \bm{z_1}^2}\right|_{\bm{z_1}=\bm{0}} \ge \alpha \bm{I}$, we further have
    \begin{align*}
     \dot{V} \leq& -\frac{\alpha}{2}\|\bm{z_1}\|^2+\frac{a^2}{2}\|\bm{z_1}\|^2+\frac{k^2+d}{2} \\
     =&-\frac{\alpha^2-a}{2}\|\bm{z_1}\|^2+\frac{k^2+d}{2}.
 \end{align*}
    Then we can conclude that the tracking error will converge to the set $\{\bm{z_1}|\|\bm{z_1}\|^2\leq\frac{k^2+d}{\alpha^2-a}\}$.
\end{proof}

In the preceding proof, we show that the NBS tracking controller can achieve bounded tracking errors and the explicit tracking error bounds can be further obtained by carefully designing $\bm{\Phi}$.
In view of Theorem~\ref{Th. LNNs_uncertainty_regularizer}, we can set the hyperparameter $m\geq\sqrt{b+\frac{b^2}{2}+\frac{a}{2}+1}$ in $\bm{D}$ to guarantee $\bm{D} \geq (b+\frac{b^2}{2}+\frac{a}{2}+1)\bm{I}$.
Moreover, we can set the hyperparameter $\bm{S} \geq \alpha\bm{I}$ in $\bm{\Phi}$ or add the regularizer $\text{ReLu}\left(\lambda_{\max}\left(\alpha \bm{I} - \left .\frac{ \partial^2\bm{\Phi}}{\partial \bm{z_1}^2}\right|_{\bm{z_1}=0} \right) \right )$ during training to ensure that $\left .\frac{ \partial^2\bm{\Phi}}{\partial \bm{z_1}^2}\right|_{\bm{z_1}=\bm{0}} \ge \alpha \bm{I}$.
Similarly as in Section~\ref{sec.learningOptFormulation}, we can further optimize the neural network parameters to achieve better control performance.


\section{Simulations}\label{sec.experiment}
In this section, in order to validate our method, we apply the NBS controller to the tracking control of manipulators.
We first perform the sin and cos signal tracking task for the two-link planar robot arm with known model information.
Our experiments encompass scenarios with and without disturbances and verify the controller performance.
Subsequently, using MuJoCo~\cite{todorov2012mujoco}, we extend our experimentation to a three-link system, employing LNN to learn the dynamic model of the system and execute trajectory tracking control.
We use pytorch with Adam ~\cite{DBLP:journals/corr/KingmaB14} optimizer throughout the experiments.
The code is available from: \url{https://github.com/jiajqian/Neural-Backstepping-tracking-controller}.

\subsection{Two-Link Planar Robot Arm with Known Model Information}
We consider the two-link planar robot arm model as shown in Fig.~\ref{two-link planar robot arm model}, where the link masses are concentrated at the ends of the links ~\cite{lewisRobotManipulatorControl2004}. 
The two-link planar robot arm has $2$ control inputs $[u_1,u_2]$ denoting the torque applied to each link and $4$ state variables $[\beta_1,\beta_2,\bm{\dot{\beta_1}},\bm{\dot{\beta_2}}]$, representing the angle and angular velocity of the links.
Each link has mass $m_i=1 \mathrm{kg}$ and length $l_i=1 \mathrm{m}$, where $i = 1,2$.
    The dynamic model of the two-link planar robot arm can be described by~\eqref{eq.Lagrangian} with $\bm{q}=[\beta_1, \beta_2], \bm{\dot{q}}=[\bm{\dot{\beta}_1}, \bm{\dot{\beta}_2}]$,
\begin{align*}
  T(\bm{\dot{q}})=&\frac{1}{2}(3+2\cos{\beta_2})\bm{\dot{\beta_1}}^2+(1+\cos{\beta_2})\bm{\dot{\beta_1}}\bm{\dot{\beta_2}}
  +\frac{1}{2}\bm{\dot{\beta_2}}^2,\\
      V(\bm{q})=&2g\sin{\beta_1}+g\sin(\beta_1+\beta_2),
\end{align*}
where $g=9.8\mathrm{N/kg}$. The control target is to let $\beta_1$ track the signal $sin(0.1t)$ and $\beta_2$ track the signal $cos(0.1t)$. The desired trajectory is $\bm{q^{d}}=[\beta_1^{d}, \beta_2^{d}]=[sin(0.1t),cos(0.1t)], \bm{\dot{q}^{d}}=[\bm{\dot{\beta}_1^{d}}, \bm{\dot{\beta}_2^{d}}]=[cos(0.1t),-sin(0.1t)]$
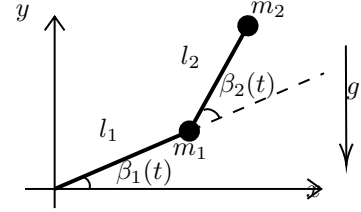
\begin{figure}[htpb]
      \centering

\tikzset{every picture/.style={line width=0.75pt}} 

\begin{tikzpicture}[x=0.75pt,y=0.75pt,yscale=-1,xscale=1]

\draw  (97.59,266.35) -- (246.98,266.35)(112.53,179.33) -- (112.53,276.01) (239.98,261.35) -- (246.98,266.35) -- (239.98,271.35) (107.53,186.33) -- (112.53,179.33) -- (117.53,186.33)  ;
\draw [line width=1.5]    (112.53,266.35) -- (180.47,237.14) ;
\draw [shift={(180.47,237.14)}, rotate = 336.74] [color={rgb, 255:red, 0; green, 0; blue, 0 }  ][fill={rgb, 255:red, 0; green, 0; blue, 0 }  ][line width=1.5]      (0, 0) circle [x radius= 4.36, y radius= 4.36]   ;
\draw [line width=1.5]    (180.47,237.14) -- (210,184) ;
\draw [shift={(210,184)}, rotate = 299.06] [color={rgb, 255:red, 0; green, 0; blue, 0 }  ][fill={rgb, 255:red, 0; green, 0; blue, 0 }  ][line width=1.5]      (0, 0) circle [x radius= 4.36, y radius= 4.36]   ;
\draw  [draw opacity=0] (187.72,226.62) .. controls (189.09,225.56) and (191.72,226.06) .. (193.74,227.8) .. controls (194.67,228.61) and (195.31,229.53) .. (195.63,230.43) -- (191.37,230.06) -- cycle ; \draw   (187.72,226.62) .. controls (189.09,225.56) and (191.72,226.06) .. (193.74,227.8) .. controls (194.67,228.61) and (195.31,229.53) .. (195.63,230.43) ;  
\draw  [draw opacity=0] (126.39,260.23) .. controls (126.48,260.25) and (126.57,260.27) .. (126.66,260.29) .. controls (129.34,260.95) and (130.88,263.57) .. (130.11,266.15) .. controls (130.11,266.16) and (130.1,266.18) .. (130.1,266.19) -- (125.26,264.95) -- cycle ; \draw   (126.39,260.23) .. controls (126.48,260.25) and (126.57,260.27) .. (126.66,260.29) .. controls (129.34,260.95) and (130.88,263.57) .. (130.11,266.15) .. controls (130.11,266.16) and (130.1,266.18) .. (130.1,266.19) ;  
\draw [line width=0.75]  [dash pattern={on 4.5pt off 4.5pt}]  (180.47,237.14) -- (248.41,207.94) ;
\draw    (259.41,193.9) -- (259.64,252.48) ;
\draw [shift={(259.65,254.48)}, rotate = 269.77] [color={rgb, 255:red, 0; green, 0; blue, 0 }  ][line width=0.75]    (10.93,-3.29) .. controls (6.95,-1.4) and (3.31,-0.3) .. (0,0) .. controls (3.31,0.3) and (6.95,1.4) .. (10.93,3.29)   ;

\draw (237.91,263.13) node [anchor=north west][inner sep=0.75pt]    {$x$};
\draw (91.45,171.88) node [anchor=north west][inner sep=0.75pt]    {$y$};
\draw (194.01,206) node [anchor=north west][inner sep=0.75pt]    {$\beta _{2}( t)$};
\draw (142,249.94) node [anchor=north west][inner sep=0.75pt]    {$\beta _{1}( t)$};
\draw (133.46,229.51) node [anchor=north west][inner sep=0.75pt]    {$l_{1}$};
\draw (174.76,192.46) node [anchor=north west][inner sep=0.75pt]    {$l_{2}$};
\draw (170.72,242) node [anchor=north west][inner sep=0.75pt]    {$m_{1}$};
\draw (210,170.49) node [anchor=north west][inner sep=0.75pt]    {$m_{2}$};
\draw (258.25,211.12) node [anchor=north west][inner sep=0.75pt]    {$g$};

\end{tikzpicture}

      \caption{The model of a 2-link planar robot arm}

      \label{two-link planar robot arm model}
   \end{figure}

We define the errors as ~\eqref{eq.errors}. Therefore, the control target can be expressed as stabilizing the errors, that is, ensuring $\underset{t\rightarrow \infty}{\lim}\bm{z_1}=0$.
   
   The NBS tracking controller is designed as ~\eqref{eq.trcaking controller}. 
   In the controller, $\bm{\psi}$ has $3$ hidden layers and each hidden layer has $32$ neurons, $\bm{S}=\bm{I}$.
Moreover, each FCNN in $\bm{D}$ has $2$ hidden layers and each hidden layer has $32$ neurons, $m=0.001$.
 
To demonstrate the unconditional stability of the NBS tracking controller, we conduct tracking experiments without any prior training in the parameters of the neural network. The simulations are performed with a time step of $0.01$ seconds. In addition to the state $[0,0,0,0]$, we also randomly selected three initial states.  Fig.~\ref{fig:simulate without training} depicts the joint angles of the robot $\bm{q}$ and the tracking errors $\bm{z_1}$, representing the deviation between $\bm{q}$ and the desired trajectory $\bm{q^{d}}$  start from different initial states.
Remarkably, the NBS tracking controller can successfully track the sin and cos signals, respectively, even without prior training.
\begin{figure}
    \centering
    \includegraphics[scale=0.18]{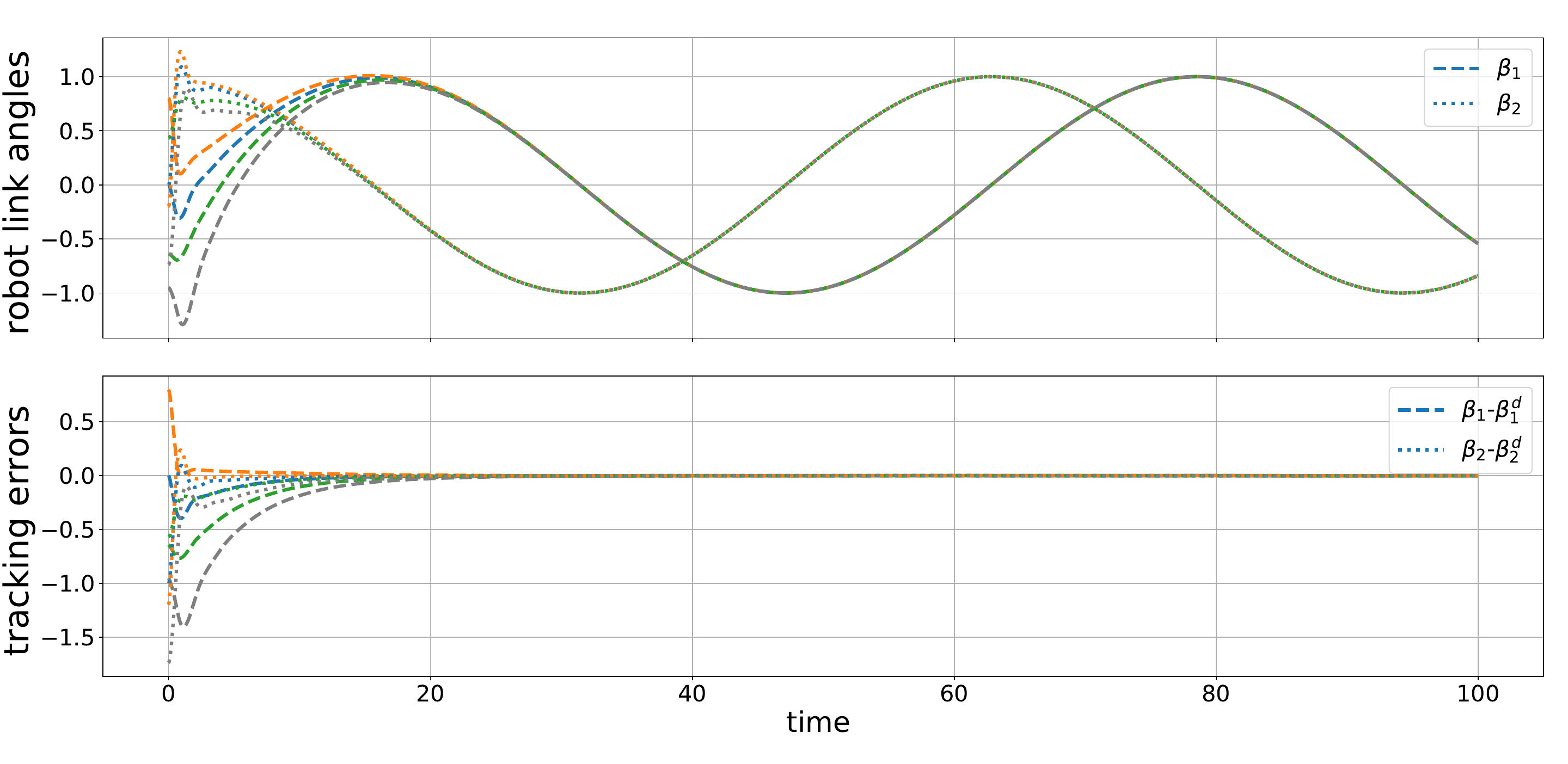}
    \caption{The time evolution of angles of the two-link planar robot arm using the NBS tracking controller without training. Different initial states and state variables are marked by different line colors and styles respectively.}
    \label{fig:simulate without training}
\end{figure}

We can improve the performance of the NBS tracking controller by solving the optimization problem ~\eqref{eq.optimization}  to achieve better performance.
We discrete the optimization problem ~\eqref{eq.optimization} with a step size of $0.01s$, the time horizon $T=1s$, start from the initial state $[0,0,0,0]$, and choose the stage cost as $l_t=\bm{z_1}^{\top}\bm{z_1}$. We use a decaying learning rate that starts at $1e^{-3}$ for $200$ epochs. Fig.~\ref{fig:simulate after training} shows the trajectory and tracking error of the angles using the trained controller  and the tranditional PID controller as a baseline.
As depicted in Fig. ~\ref{fig:simulate after training}, it is evident that the NBS tracking controller effectively steers the angles, rapidly aligning them with the desired trajectory. This outcome signifies a noticeable improvement in performance. And we can observe that the NBS tracking controller has better performance than the traditional PID controller.

\begin{figure}
    \centering
    \includegraphics[scale=0.18]{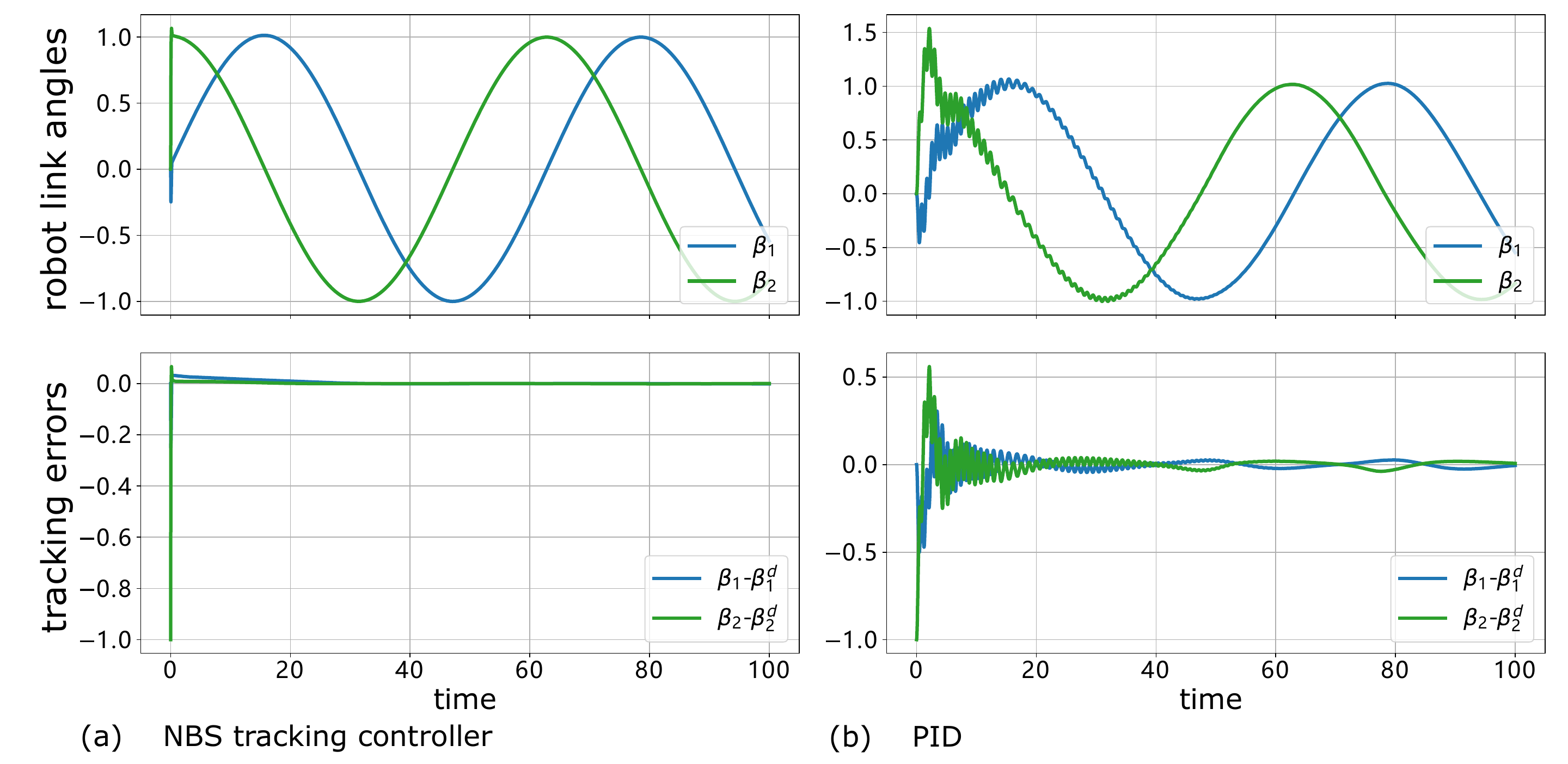}
    \caption{The time evolution of angles of the two-link planar robot arm (a) using the NBS tracking controller after training, (b) using the PID controller.}
    \label{fig:simulate after training}
\end{figure}

Traditional PID controllers cannot effectively deal with the presence of disturbances. To demonstrate that the NBS tracking controller can guarantee a bounded tracking error in the presence of disturbances,  we consider the constant disturbance case $\bm{\tau^{d}}=[1.0,1.0]$. We set the hyperparameter $m$ in $\bm{D}$ to $1.0$ to ensure $\bm{D}(\bm{z_2})\ge \frac12 \bm{I}$.
We choose the stage cost as $l_t=\bm{z_1}^{\top}\bm{z_1} $, discrete the optimization problem ~\eqref{eq.optimization} with a step size of $0.01s$ and the time horizon $T=1s$.
We solve the optimization problem ~\eqref{eq.optimization} to get $\theta_1^*,\theta_2^*$.
To investigate the impact of the parameter $\alpha$ of the regularization term on the stable tracking error bound, we uniformly sample $40$ values for $\alpha$ from $[0,2]$.
We use a learning rate $1e^{-3}$ for each $\alpha$ over a span of $200$ epochs.  According to Thereom \ref{Th. disturbance_regularizer}, the tracking error should converge to $\|\bm{z_1}\|^2\le \frac{1}{2\alpha^2}\times2$.
Fig.~\ref{fig:error to alpha} shows that the NBS tracking controller can be trained to ensure a small steady-state tracking error, and the error is bounded in the presence of disturbances.
It can be seen that increasing the value $\alpha$ can reduce the maximum steady-state error.
However, it should be noted that $\alpha$ should not be set too large; otherwise, this will make learning the optimal controller difficult.

\begin{figure}[!t]
    \centering
    \includegraphics[scale=0.17]{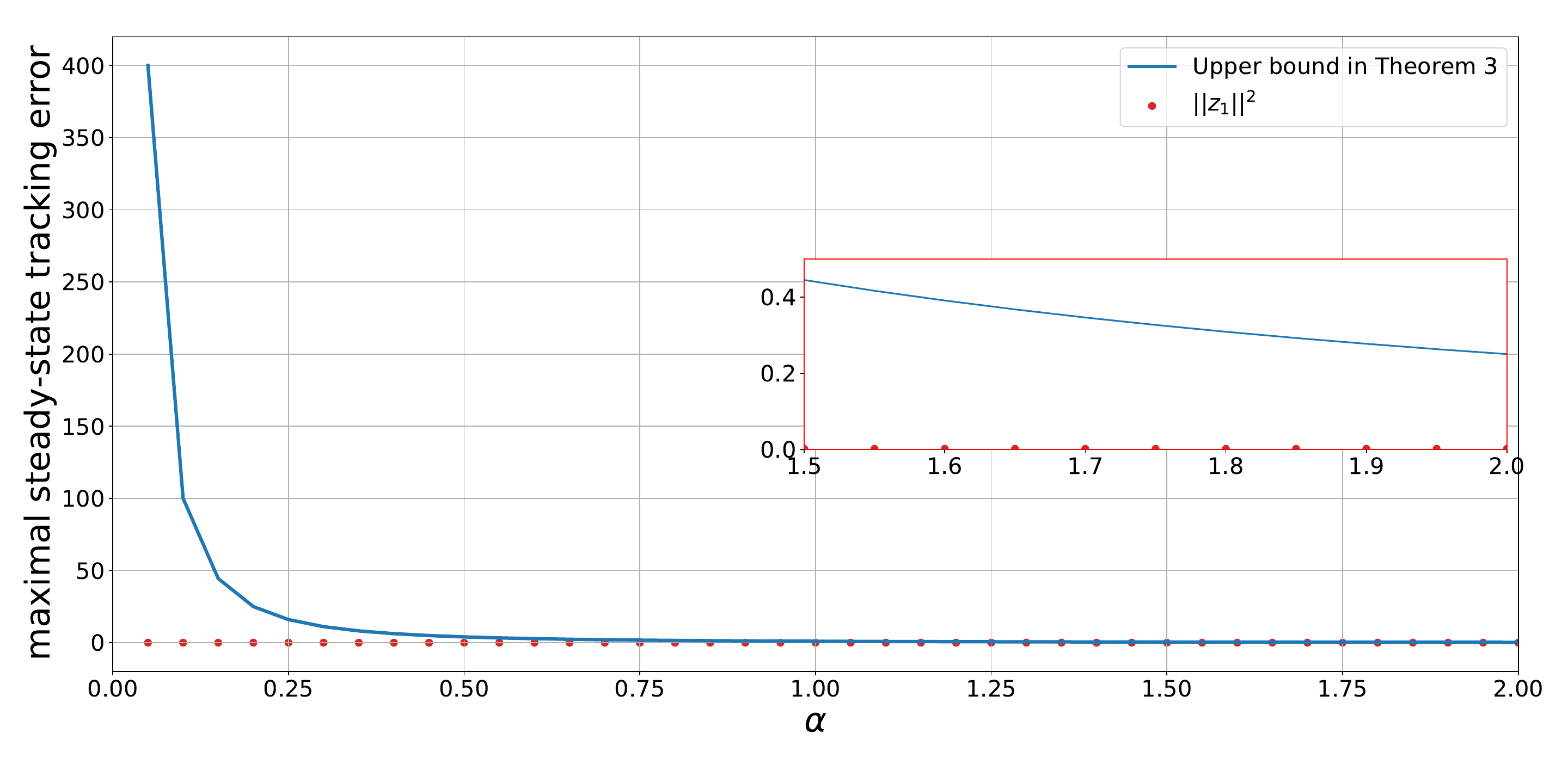}
    \caption{The steady state tracking error under different $\alpha$ and the corresponding upper bounds in Theorem \ref{Th. disturbance_regularizer}}
    \label{fig:error to alpha}
\end{figure}

The specific performances of different simulations are shown in Table \ref{tab.performance}. We simulate 100s, select the maximum value of $\|\bm{z_1}\|^2$ in the last $30s$ as the steady state tracking error, and select the first time when $\|\bm{z_1}\|^2$ remains less than $0.01$ as the convergence time. From Table \ref{tab.performance}, we can see more intuitively that our controller has higher tracking accuracy and faster convergence speed than the traditional PID controller. Furthermore, by optimizing the DNN parameters our controller can achieve better performance, especially for the convergence time. And our controller can ensure stability and maintain a small tracking error in the presence of disturbances.

    \begin{table}
        \begin{center}
        \caption{The performance of different controllers, when the system start from the initial states $[0,0,0,0]$}
        \label{tab.performance}
        \begin{tabular}{|>{\centering\arraybackslash}m{3.2cm}|>{\centering\arraybackslash}m{2cm}|>{\centering\arraybackslash}m{2cm}|}
        \hline
              controllers&  steady state tracking error&convergence time\\
                    \hline
     PID controller& 2.06$e^{-3}$&15.54s\\\hline
              NBS tracking controller (without training)& 1.30$e^{-6}$&5.72s\\
              \hline
     NBS tracking controller    
    (after training)& 1.22$e^{-6}$&0.13s\\ \hline
     NBS tracking controller ($\alpha=1$, $\bm{\tau^d}=[1.0,1.0]$ )& 1.25$e^{-3}$&0.13s\\\hline
     
     \end{tabular}
       
        \end{center}
         
    \end{table}

\subsection{Three-link Planar Robot Arm with Unknown Model Information}
The mathematical model of a three-link planar robot arm is difficult to obtain.
Therefore, we propose to use the MuJoCo physics simulator~\cite{todorov2012mujoco} to sample data and then learn the Lagrangian function of the three-link planar robot arm, which gives an approximated model of the three-link planar robot arm.
The model we built in MuJoCo 
has $3$ control inputs $[u_1,u_2,u_3]$ denoting the torque applied to each link, and $6$ state variables $[\beta_1,\beta_2,\beta_3,\bm{\dot{\beta_1}},\bm{\dot{\beta_2}},\bm{\dot{\beta_3}}]$, representing links' angle and angular velocity. The desired tracking trajectory is $[\beta_1^{d}=sin(0.1t),\beta_2^{d}=cos(0.1t),\beta_3^{d}=sin(0.1t)]$ . 


When building the LNN $\mathcal{L}$, $\mathcal{L}_T$ is a PICNN in the form of \eqref{eq.standard PICNN}.
$\mathcal{L}_T$ and $\mathcal{L}_V$ have $3$ hidden layers, each layer has $32$ neurons, and the activation function is softplus.
First, we let the MuJoCo simulator run without imposing any control input on the robot arm, and obtain the state information of the model, gathering the dataset for training $\mathcal{L}$.
We choose the simulation step size to be $0.001s$, the initial state to be $[0,0,0,0,0,0]$, the control inputs $\bm{u}=0$, and sample $10,000$ points.
Then we solve \eqref{eq.LNNs optimization} and obtain $\bm{\gamma^*}$.
We train $\mathcal{L}$ for $200$ epochs, a batch size of $10$ and a decaying learning rate start at $1e^{-3}$.

After training $\mathcal{L}$, we use the LNN-based NBS tracking controller as \eqref{eq.LNNs controller}.
We discrete the optimization problem \eqref{eq.LNNs optimization} with a step size of $0.01s$, the time horizon $T=1s$ and choose the stage cost as $l_t=\bm{z_1}^{\top}\bm{z_1}$.
We still use a decaying learning rate starting at $1e^{-3}$ for $200$ epochs. 
Fig. \ref{fig:mujoco LNNs trajectory} shows the angular trajectory and tracking errors.
The NBS tracking controller effectively achieves high-precision  tracking, maintaining a steady-state tracking error with $\bm\|{z_1}\|^2 \leq 1.5e^{-3}$.
This experiment further demonstrates the efficacy of the LNNs-based NBS tracking controller, particularly in scenarios where the system model remains unknown or difficult to obtain. 

\begin{figure}
\centering
    \includegraphics[scale=0.18]{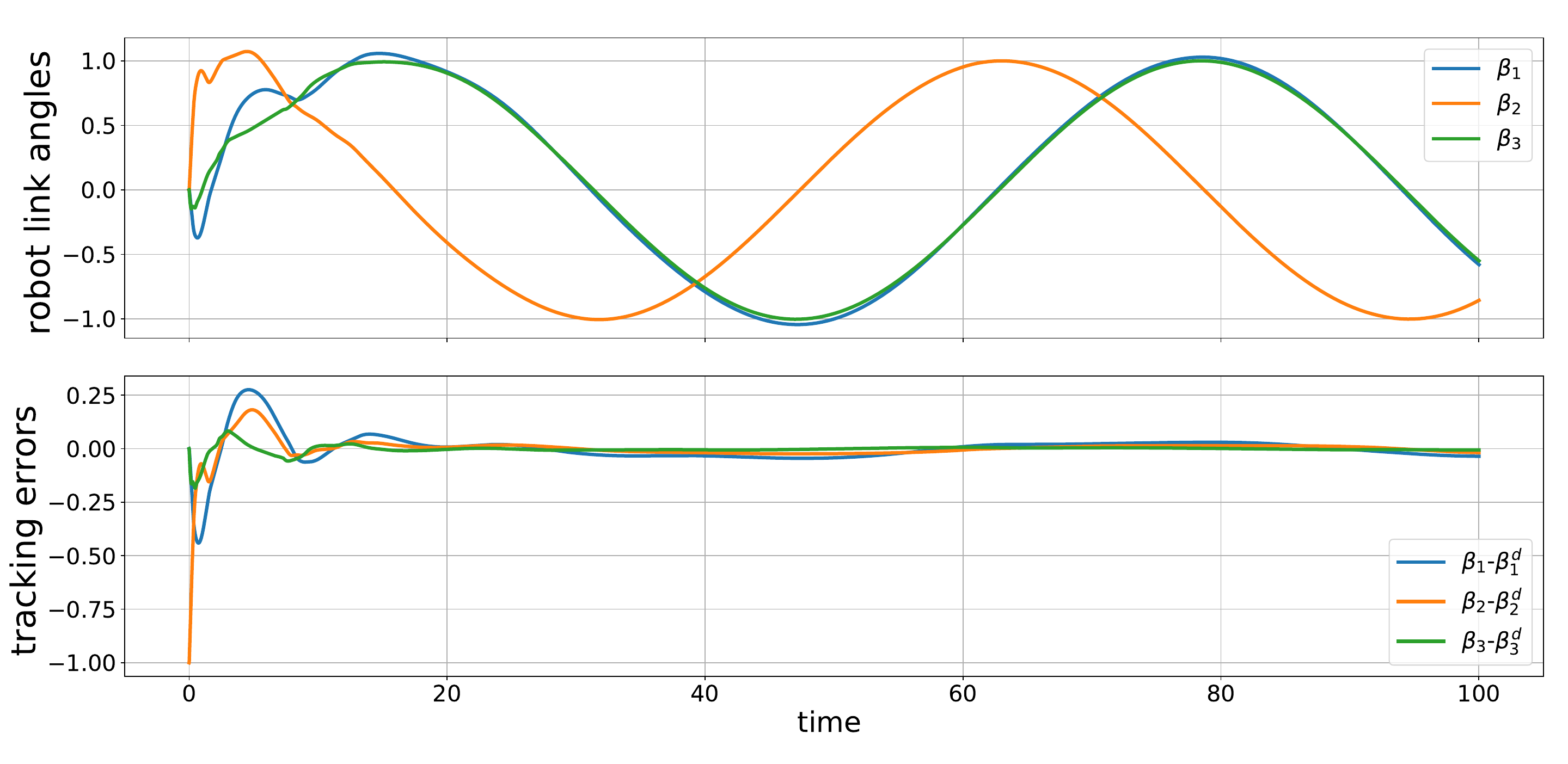}
    \caption{The time evolution of angles of the three-link planar robot arm using LNNs-based NBS tracking controller under MuJoCo simulator.}
    \label{fig:mujoco LNNs trajectory}
\end{figure}

\section{Conclusions}\label{sec.conclusion}
In this paper, we propose a structured DNN controller based on backstepping methods.
With properly designed DNN structures, the controller has unconditional stability guarantees.
In addition, its parameters can be optimized to achieve better performance.
We further prove that the tracking error is bounded in the presence of disturbances.
When the model information is unknown, we use ICNN to improve the LNN structure and use the improved LNNs to learn system dynamics.
The controller is then designed on the basis of the learned system dynamics.
We can also prove that the tracking error is bounded in the presence of model uncertainties and external disturbances.
In the future, we plan to generalize the method to general nonlinear systems. 


\bibliographystyle{IEEEtran}
     \bibliography{ref_NNtracking}

     
\end{document}